\newcommand{\ie}{\emph{i.e.}}
\newcommand{\eg}{\emph{e.g.}}
\newtheorem{theorem}{Theorem}
\newtheorem{lemma}{Lemma}[section]
\newcommand{\argmin}[1]{\mathop{\arg\!\min}_{#1}}
\newcommand{\dic}[1]{\textrm{\upaccent{\aboxshift{\char"12}}{$#1$}}}
\newcommand{\calC}{\cp{C}}
\newcommand{\calR}{\cp{R}}
\newcommand{\ps}[2]{\langle{#1},{#2}\rangle}
\newcommand{\psH}[2]{\langle{#1},{#2}\rangle_{\H}}
\newcommand{\normH}[1]{\|{#1}\|_\H}
\newcommand{\normHbig}[1]{\big\|{#1}\big\|_\H}
\newcommand{\normHBig}[1]{\Big\|{#1}\Big\|_\H}
\def\R{\ensuremath{\mathds{R}}}
\def\X{\ensuremath{\mathds{X}}} 
\def\Y{\ensuremath{\mathds{Y}}} 
\def\H{\ensuremath{\mathds{H}}} 
\def\D{\ensuremath{\mathcal{D}}} 
\newcommand{\cb}[1]{{\ifmmode {\boldsymbol{#1}}\else ${\boldsymbol{#1}}$\fi}}
\newcommand{\cp}[1]{\ifmmode {\mathcal{#1}}\else ${\mathcal{#1}}$\fi}
\newcommand{\dx}{\dic{\bx}}
\newcommand{\dkappa}{\dic{\bkappa}}
\newcommand{\dK}{\dic{\bK}}
\newcommand{\bx}{\cb{x}}
\newcommand{\bxi}{\cb{\xi}}
\newcommand{\balpha}{\cb{\alpha}}
\newcommand{\bbeta}{\cb{\beta}}
\newcommand{\bK}{{\cb{K}}}
\newcommand{\bI}{{\bf I}}
\newcommand{\bkappa}{\cb{\kappa}}
\begin{document}


\title{Approximation errors of online sparsification criteria 
}

\author{Paul~Honeine,~\IEEEmembership{Member~IEEE}
\thanks{P.~Honeine is with the Institut Charles Delaunay (CNRS), Universit\'e de technologie de Troyes, 10000, Troyes, France. Phone: +33(0)325715625; Fax: +33(0)325715699; E-mail: paul.honeine@utt.fr
}}

\markboth{}
{Honeine: Approximation errors of online sparsification criteria}



\sloppy

\maketitle

\begin{abstract}

Many machine learning frameworks, such as resource-allocating networks, kernel-based methods, Gaussian processes, and radial-basis-function networks, require a sparsification scheme in order to address the online learning paradigm. For this purpose, several online sparsification criteria have been proposed to restrict the model definition on a subset of samples. The most known criterion is the (linear) approximation criterion, which discards any sample that can be well represented by the already contributing samples, an operation with excessive computational complexity. Several computationally efficient sparsification criteria have been introduced in the literature, such as the distance, the coherence and the Babel criteria. In this paper, we provide a framework that connects these sparsification criteria to the issue of approximating samples, by deriving theoretical bounds on the approximation errors. Moreover, we investigate the error of approximating any feature, by proposing upper-bounds on the approximation error for each of the aforementioned sparsification criteria. Two classes of features are described in detail, the empirical mean and the principal axes in the kernel principal component analysis.
\end{abstract}

\begin{keywords} 
Sparse approximation, adaptive filtering, kernel-based methods, resource-allocating networks, Gaussian processes, Gram matrix, machine learning, pattern recognition, online learning, sparsification criteria.
\end{keywords}

%
\IEEEpeerreviewmaketitle

%
%

\section{Introduction}

\PARstart{D}{ata deluge} in the era of ``Big Data'' brings new challenges (and opportunities) in the area of machine learning and signal processing \cite{stsp.bigdata14,tsp.bigdata14,spm.bigdata14}. Demanding online learning, this paradigm cannot be addressed directly by most (if not all) conventional learning machines, such as resource-allocating networks \cite{Platt91}, kernel-based methods for classification and regression \cite{Vap98}, Gaussian processes \cite{gpml}, radial-basis-function networks \cite{Huang05ageneralized} and kernel principal component analysis \cite{12.tpami}, only to name a few. Indeed, these machines share essentially the same underlying model, with as many parameters to be estimated as training samples, as defined by the ``Representer Theorem'' \cite{Representer}. This model is inappropriate in online learning, where a new sample is available at each instant. To stay computationally tractable, one needs to restrict the incrementation in the model complexity, by selecting the subset of samples that contributes to a reduced-order model as an approximation of the full-order feature to be estimated.

In order to overcome this bottleneck in online learning, sparsification schemes have been proposed for all the aforementioned machines, defined as follows: 
at each instant, it determines if the new sample can be safely discarded from contributing to the order growth of the model; otherwise, the sample needs to take part in the order incrementation. The most known online sparsification criteria is the approximation criterion, also called approximate linear dependency. It has been widely investigated in the literature, for Gaussian processes \cite{Csato02}, kernel recursive least squares algorithm \cite{Eng04},  kernel least mean square algorithm \cite{Pokharel2009}, and kernel principal component analysis \cite{12.tpami}. This criterion determines the relevance of discarding or accepting the current sample by comparing, to a predefined threshold, the residual error of approximating it with a representation (\ie, linear combination) of samples --- or nonlinearly mapped samples as in kernel methods --- already contributing to the model. A crucial issue in the approximation criterion is its computational complexity, which scales cubically with the model order.

Several computationally efficient sparsification criteria have been introduced in the literature, with essentially the same computational complexity that scales linearly with the model order. These sparsification criteria rely on the topology of the samples in order to select the most relevant samples. The most widely investigated criteria are the distance and the coherence criteria, as well as the Babel criterion. The distance criterion, introduced by Platt in \cite{Platt91} to control the complexity of resource-allocating networks in radial-basis-function networks, retains the most mutually distant samples ; see also \cite{Babu2013,Yang2013} 
for recent advances. The coherence criterion, introduced by Honeine, Richard, and Bermudez in \cite{Hon07.isit,Ric09.tsp} with the recent advances in compressed sensing \cite{Tro04,Elad2010book}, retains samples that are mutually least coherent. As an extension of the coherence criterion, the Babel criterion uses the cumulative coherence as a measure of diversity \cite{Fan2014}.

These sparsification criteria have been separately investigated in the literature. To the best of our knowledge, there is no work that studies all these sparsification criteria together. The conducted analyses have been often based on the computational complexity, as advocated in \cite{Ric09.tsp,KernelAdaptiveFiltering} by criticizing the computational cost of the approximation criterion in favor of the other sparsification criteria. In \cite{Hon07.isit,Ric09.tsp,12.ssp.one_class}, we have developed with colleagues several theoretical results that allows to compare the coherence to the approximation criterion. These results have not been extended to other sparsification criteria, and were demonstrated for the particular case of unit-norm data.


This paper presents a framework to study online sparsification criteria by cross-fertilizing previously derived results, by obtaining often tighter bounds, and by extending these results to other sparsification criteria such as the distance and the Babel criteria. One the one hand, we bridge the gap between the approximation criterion and the other online sparsification criteria, firstly by providing upper bounds on the error of approximating, with samples already retained, any sample discarded by the sparsification criterion, and secondly by providing lower bounds on the error of approximating accepted samples. On the other hand, we examine the relevance of approximating any feature with a sparse model obtained with any of the aforementioned sparsification criteria, including the approximation criterion. We provide upper bounds on the error of approximating any feature in the general case. Furthermore, we explore in detail two particular features, the empirical mean (\ie, centroid, studied for instance in \cite{12.isit,Jenssen13}) and the principal axes in the kernel principal component analysis (kernel-PCA, \cite{KPCA}). The big picture of the cross-fertilization and extensions given in this paper is illustrated in \tablename~\ref{tab:birdseye}.

The remainder of this paper is organized as follows. Next section introduces the kernel-based machines for online learning and presents the key issues studied in this work. Section~\ref{sec:criteria} presents the aforementioned computationally efficient sparsification criteria. Section~\ref{sec:approx.error} investigates bounds on the error of approximation samples, either discarded or accepted by any sparsification criterion. These results are extended in Section~\ref{sec:feature} to the problem of approximating any feature. Section~\ref{sec:final_remarks} concludes this document with some discussions.

\begin{table}[t]
\renewcommand{\arraystretch}{1.3} 
\begin{center}
\begin{tabular}{lccccl}
 					& \rotatebox{90}{Distance} & \rotatebox{90}{Approximation} & \rotatebox{90}{Coherence} & \rotatebox{90}{Babel}
					&
					\!\!\!Section\!\!\!\! 
					\\\hline
Reference: most known work & \cite{Platt91} & \cite{Csato02} & \cite{Ric09.tsp}
& \cite{Tro04} \\
Reference: more recent work & \cite{KernelAdaptiveFiltering} & \cite{12.tpami} & \cite{13.spl.dictionary} & \cite{Fan2014} \\
\rowcolor[gray]{.97} \bf Approximation of any sample & \bf $\checkmark$ & \bf $\cdot$ & \bf $\checkmark$ & \bf $\checkmark$  & \bf \ref{sec:approx.error} \\
~\rotatebox[origin=c]{180}{$\large\Lsh$} Error on discarded samples & $\checkmark$ & $\cdot$ & \cite{Hon07.isit} & $\checkmark$ & \ref{sec:approx.error.discard} \\
 ~\rotatebox[origin=c]{180}{$\large\Lsh$} Error on any atom & $\checkmark$ & $\cdot$ & \cite{Ric09.tsp} & \color{gray} \cite{Fan2014} & \ref{sec:approx.atom} \\
\rowcolor[gray]{.97} \bf Approximation of any feature & \bf $\checkmark$ & \bf $\checkmark$ & \bf $\checkmark$ & \bf $\checkmark$ & \bf \ref{sec:feature} \\
~\rotatebox[origin=c]{180}{$\large\Lsh$} Error on the mean (centroid) & $\checkmark$ & $\checkmark$ & \cite{12.ssp.one_class} & $\checkmark$ & \ref{sec:feature.mean} \\
~\rotatebox[origin=c]{180}{$\large\Lsh$} Error on the principal axes & $\checkmark$ & \color{gray} \cite{Eng04}
& \color{gray} \cite{Hon07.isit} & $\checkmark$ & \ref{sec:feature.kpca} \\
\hline
\end{tabular} 
\end{center}
\caption{A birds eye view of this paper. Some of the results were previously studied for unit-norm kernels, as shown with the references given in the table (where $\cdot$ denotes triviality). In this work, we provide an extensive study that completes the analysis to all sparsification criteria, often with tighter bounds ({\color{gray} shown in gray color}), and we derive new theoretical results. Moreover, we generalize these results to any type of kernel, beyond unit-norm kernels.}
\label{tab:birdseye}
\end{table}

\section{Kernel-based machines for online learning}

In this section, we introduce the kernel-based machines for online learning, by presenting the approximation criterion with the key issues studied in this paper.


\subsection{Machine learning and online learning}

Machine learning seeks a feature $\psi(\cdot)$ connecting an input space $\X \subset \R^d$ to an output space $\Y \subset \R$, by using a set of training samples, denoted $\{(\bx_1,y_1), (\bx_2,y_2), \ldots , (\bx_n,y_n)\}$ with $(\bx_k,y_k) \in \X\times\Y$. Considering a loss function $\calC(\cdot,\cdot)$ defined on $\Y \times \Y$ that measures the error between the desired output and the estimated one with $\psi(\cdot)$, the optimization problem consists in minimizing a regularized empirical risk of the form
\begin{equation}\label{eq:risk}
    \argmin{\psi(\cdot) \in \H} 
    \sum_{i=1}^n \calC(\psi(\bx_i),y_i) + \eta \, \calR (\|\psi(\cdot)\|_\H^2),
\end{equation}
where $\H$ is the feature space of candidate solutions and $\eta$ is a parameter that controls the tradeoff between the fitness error (first term) and the regularity of the solution (second term) with $\calR(\cdot)$ being a monotonically increasing function. Examples of loss functions are the quadratic loss $| \psi(\bx_i) - y_i |^2$, the hinge loss $( 1 - \psi(\bx_i) y_i )_+$ of the SVM \cite{Vap98}, the logistic regression $\log( 1 + \exp(-\psi(\bx_i) y_i))$, as well as the unsupervised loss function $- |\psi(\bx_i)|^2$ which is related to the PCA.

Let $\kappa\!:\X \times \X\rightarrow\R$ be a positive definite kernel, and $(\H,\psH{\cdot}{\cdot})$ the induced reproducing kernel Hilbert space (RKHS) with its inner product. The reproducing property states that any function $\psi(\cdot)$ of $\H$ can be evaluated at any sample $\bx_i$ of $\X$ using $\psi(\bx_i) = \psH{\psi(\cdot)}{\kappa(\cdot,\bx_i)}$. This property shows that any sample $\bx_i$ of $\X$ is represented with $\kappa(\cdot,\bx_i)$ in the space $\H$. Moreover, the reproducing property leads to the so-called kernel trick, that is for any pair of samples $(\bx_i,\bx_j)$, we have $\psH{\kappa(\cdot,\bx_i)}{\kappa(\cdot,\bx_j)} = \kappa(\bx_i,\bx_j)$. In particular, $\normH{\kappa(\cdot,\bx_i)} = \psH{\kappa(\cdot,\bx_i)}{\kappa(\cdot,\bx_i)} = \kappa(\bx_i,\bx_i)$. The most used kernels and there expressions are as follows:
\begin{center}
\renewcommand{\arraystretch}{1.1} 
\begin{tabular}{l@{\qquad}c}
Kernel & $\kappa(\bx_i,\bx_j)$  \\\hline\hline
Linear & $\ps{\bx_i}{\bx_j}$ \\
Polynomial & $\left(\ps{\bx_i}{\bx_j} + c\right)^p$\\
Exponential & $\exp\left(\ps{\bx_i}{\bx_j}\right)$ \\
Gaussian & $\exp\left(\frac{-1}{2\sigma^2} \|\bx_i - \bx_j\|^2\right)$ \\
\hline\hline
\end{tabular}
\end{center}
Among these kernels, only the Gaussian kernel is unit-norm, that is $\normH{\kappa(\bx,\cdot)}=1$ for any sample $\bx \in \X$. Other kernels can be unit-norm on some restricted $\X$, such as the linear kernel when dealing with unit-norm samples. In this paper, we do not restrict ourselves to any particular kernel or space $\X$. We denote 
\begin{equation*}
 	{~r^2 = \inf_{\bx \in \X} \kappa(\bx,\bx)~}
		\qquad \text{and} \qquad 
 	{~R^2 = \sup_{\bx \in \X} \kappa(\bx,\bx)~}.
\end{equation*}
For unit-norm kernels, we get $R=r=1$.

The Representer Theorem provides a principal result that is essential in kernel-based machines for classification and regression, as well as unsupervised learning. It states that the solution of the optimization problem \eqref{eq:risk} takes the form
\begin{equation}\label{eq:repr}
		\psi(\cdot) = \sum_{i=1}^{n} \alpha_i \, \kappa(\bx_i,\cdot).
\end{equation}
The proof of this theorem is derived in \cite{Representer}, and a sketch of proof is given in the footnote\footnote{To prove the Representer Theorem \eqref{eq:repr}, we decompose any function $\psi(\cdot)$ of $\H$ into $\psi(\cdot)=\sum_{i=1}^n \alpha_i\,\kappa(\bx_i,\cdot) + \psi^\perp(\cdot)$, where $\langle\psi^\perp(\cdot),\kappa(\bx_i,\cdot)\rangle_{\H}=0$ for all $i=1,2,\ldots,n$. On the one hand, any evaluation $\psi(\bx_i)$ is independent of $\psi^\perp(\cdot)$ since $\psi(\bx_i)=\psH{\psi(\cdot)}{\kappa(\bx_i,\cdot)}$. On the other hand, the monotonically increasing function $\calR(\cdot)$ guarantees that $\calR(\|\psi(\cdot)\|_\H^2) = \calR(\|\sum_{i=1}^n \alpha_i\,\kappa(\bx_i,\cdot)+\psi^\perp(\cdot)\|_\H^2) \geq \calR(\|\sum_{i=1}^n \alpha_i\,\kappa(\bx_i,\cdot)\|_\H^2)$, where the Pythagorean theorem is used. Therefore, a null $\psi^\perp(\cdot)$ minimizes the regularization term without affecting the fitness term.}. This theorem shows that the optimal solution has as many parameters $\alpha_i$ to be estimated as the number of available samples $(\bx_i,y_i)$. This result constitutes the principal bottleneck for online learning. Indeed, in an online setting, the solution should be adapted based on a new sample available at each instant, namely $(\bx_t,y_t)$ at instant $t$. Thus, by including the new pair $(\bx_t,y_t)$ in the training set, the corresponding parameter $\alpha_t$ is be added to the set of parameters to be estimated, by following the Representer Theorem. As a consequence, the order of the model \eqref{eq:repr} is continuously increasing. 

To overcome this bottleneck, one needs to control the growth of the model order at each instant, by keeping only a fraction of the kernel functions in the expansion \eqref{eq:repr}. The reduced-order model takes the form
\begin{equation}\label{eq:repr_m}
       \psi(\cdot) = \sum_{j=1}^{m} \alpha_{j} \, \kappa(\dx_j,\cdot)
\end{equation}
with $m \ll t$, predefined or dependent on $t$. In this expression, $\{\dx_1, \dx_2, \ldots, \dx_m\}$ is a subset of $\{\bx_1, \bx_2, \ldots, \bx_t\}$, namely $\dx_j$ is some $\bx_{\omega_j}$ with $\omega_j \in \{1, 2, \ldots, t\}$. We denote by dictionary the set $\D=\{\kappa(\dx_1,\cdot), \kappa(\dx_2,\cdot), \ldots, \kappa(\dx_m,\cdot)\}$, and by atoms its elements. Throughout this paper, all quantities associated to the dictionary have an accent (by analogy to phonetics, where stress accents are associated to prominence). This is the case for instance of the $m$-by-$m$ Gram matrix $\dK$ whose $(i,j)$-th entry is $\kappa(\dx_i,\dx_j)$. The eigenvalues of this matrix are denoted $\dic{\lambda}_{1}, \dic{\lambda}_{2}, \ldots, \dic{\lambda}_{m}$, given in non-increasing order.

The optimization problem is two-fold at each instant: selecting the proper dictionary $\D=\{\kappa(\dx_1,\cdot), \kappa(\dx_2,\cdot), \ldots, \kappa(\dx_m,\cdot)\}$ and estimating the corresponding parameters $\alpha_1, \alpha_2, \ldots, \alpha_m$. New challenges (and opportunities) arise in an online learning setting. Determining the optimal dictionary at each instant is a combinatorial optimization problem, when optimality is measured by comparing reduced-order solution \eqref{eq:repr_m} to the feature in its full-order form \eqref{eq:repr}. An elegant way to overcome this computationally intractable problem, is a recursive update, by determining if the new kernel function $\kappa(\bx_t,\cdot)$ needs to be included to the dictionary, or it can be discarded since it is efficiently approximated with atoms already belonging to the dictionary. This is the essence of the approximation criterion.



\subsection{Approximation criterion}\label{sec:approx}

The (linear) approximation criterion was initially proposed in \cite{Bau01} for classification and regression, and in \cite{Csato01} for Gaussian processes. In online learning with kernels, as studied for system identification in \cite{Eng04} and more recently for kernel principal component analysis in \cite{12.tpami}, it operates as follows: the current sample is discarded (not included in the dictionary), if it can be sufficiently represented by a linear combination of atoms already belonging to the dictionary; otherwise, it is included in the dictionary. Formally, 
the kernel function $\kappa(\bx_t,\cdot)$ is included in the dictionary if 
\begin{equation}\label{eq:approx.crit}
    \min_{\xi_1\cdots\xi_m}\normHBig{\kappa(\bx_t,\cdot) - \sum_{j=1}^m \xi_j\,\kappa(\dx_j,\cdot)}^2 \geq \delta^2,
\end{equation}
where $\delta$ is a positive threshold parameter that controls the level of sparseness. The above norm is the residual error obtained by projecting $\kappa(\bx_t,\cdot)$ onto the space spanned by the dictionary. The optimal value of each coefficient $\xi_j$ is obtained by nullifying the derivative of the above cost function with respect to it, which leads to
\begin{equation*}
    \bxi = \dK^{-1} \dkappa(\dx_t),
\end{equation*}
where $\dkappa(\bx_t)$ is the column vector of entries $\kappa(\dx_j,\bx_t)$, for $j=1,2,\ldots, m$. By injecting this expression in the condition \eqref{eq:approx.crit}, we get the following condition of accepting the current kernel function: 
\begin{equation}\label{eq:approx.crit1}
	\kappa(\bx_t,\bx_t) - \dkappa(\bx_t)^\top \dK^{-1} \dkappa(\bx_t) \geq \delta^2.
\end{equation}
The resulting dictionary is called $\delta$-approximate, satisfying the relation 
\begin{equation*}
    \min_{i=1\cdots m} \min_{\xi_1\cdots \xi_m}\normHBig{\kappa(\dx_i,\cdot) - \mathop{\sum_{j=1}^m}_{j \neq i} \xi_j\,\kappa(\dx_j,\cdot)} \geq \delta
    .
\end{equation*}
One could also include a removal process, in the same spirit as the fixed-budget concept, by discarding the atom that can be well approximated with the other atoms, as investigated for instance in \cite{NguyenTuong11}. Nonetheless, the dictionary is still $\delta$-approximate. The use of a removal process does not affect the results given in this paper.

\subsection{Issues studied in this paper}

In the following, we describe several issues that motivates (and structures) this work, illustrated here with respect to the approximation criterion.

\subsection*{Computational complexity}

The approximation criterion requires the inversion of the Gram matrix associated to the dictionary, which is the most computational expensive process. Its computational complexity scales cubically with the size of the dictionary, \ie, $\cp{O}(m^3)$ operations. 
Moreover, the evaluation of the condition expressed in \eqref{eq:approx.crit1} requires two matrix multiplications at each instant. These computation cost may counteract the benefits of several online learning techniques, such as gradient-based and least-mean-square algorithms (\eg, LMS, NLMS, affine projection, ...). 

To reduce the computational burden of the approximation criterion, several computationally efficient sparsification criteria have been proposed in the literature, sharing essentially the same computational complexity that scales linearly with the size of the dictionary, \ie, $\cp{O}(m)$ operations at each instant. The most known criteria are the distance, the coherence and the Babel criteria; see Section~\ref{sec:criteria} for a description. 

\subsection*{Approximation error of any sample}

The approximation criterion relies on establishing a dictionary such that the error of approximating each of its atoms, with a linear combination of the other atoms, cannot be smaller than the given threshold $\delta$. Moreover, the decision of discarding any sample from the dictionary is defined by the same process, namely when its approximation error, with a linear combination of the other atoms, is smaller than the same threshold $\delta$. While the approximation criterion possesses such duality between accepting and discarding samples at the very same value of thresholding the approximation error, this is not the case of the other sparsification criteria. 

In Section~\ref{sec:approx.error}, we bridge the gap between the approximation criterion and the other online sparsification criteria. For this purpose, on the one hand in Section~\ref{sec:approx.error.discard}, we derive upper bounds on the error of approximating a discarded samples with atoms of a dictionary obtained by the distance, the coherence, or the Babel criterion. One the other hand in Section~\ref{sec:approx.atom}, we derive lower bounds on the error of approximating any atom with the other atoms of the sparse dictionary under scrutiny.

\subsection*{From approximating samples to approximating features}

All the aforementioned sparsification criteria operate in a pre-processing scheme, by selecting samples independently of the resulting sparse representation of the feature. In other words, the selection of the relevant subset $\{\dx_1, \dx_2, \ldots, \dx_m\}$ from the set $\{\bx_1, \bx_2, \ldots, \bx_t\}$ is only based on the topology of the samples; it is independent of the power of the dictionary to approximate accurately any feature of the form \eqref{eq:repr} with the reduced-order model \eqref{eq:repr_m}.

In Section~\ref{sec:feature}, we study the relevance of approximating any feature with a sparse dictionary obtained by any sparsification criterion, including the approximation criterion. We derive upper bounds on the approximation error of any feature, before examining in detail two particular class of features, the empirical mean studied in Section~\ref{sec:feature.mean} and the most relevant principal axes in kernel-PCA investigated in Section~\ref{sec:feature.kpca}.

\section{Online sparsification criteria}\label{sec:criteria}

With a novel sample $\bx_t$ available at each instant $t$, a sparsification rule determines if $\kappa(\bx_{t}, \cdot)$ should be included in the dictionary, by incrementing the model order $m$ and setting $\dx_{m+1} = \bx_{t}$. The sparsification criteria measure the relevance of such complexity-incrementation by comparing the current kernel function $\kappa(\bx_{t}, \cdot)$ with the atoms of the dictionary. They are defined by either a dissimilarity measure, \ie, constructing the dictionary with the most mutually distant atoms, or a similarity measure, \ie, constructing the dictionary with the least coherent or correlated atoms. To this end, a threshold is used to control the level of sparsity of the dictionary. The most investigated criteria are outlined in the following.

\subsection{Distance criterion}\label{sec:distance}

It is natural to propose a sparsification criterion that constructs a dictionary with large distances between its entries, thus discarding samples that are too close to any of the atoms already belonging to the dictionary. The current kernel function $\kappa(\bx_t,\cdot)$ is included in the dictionary if 
\begin{equation}\label{eq:dist.crit}
     \min_{j=1\cdots m} \min_{\xi} \normH{\kappa(\bx_t,\cdot) - \xi \, \kappa(\dx_j,\cdot)} \geq \delta,
\end{equation}
for a predefined positive threshold $\delta$; otherwise, it can be efficiently approximated, up to a multiplicative constant, with an atom of the dictionary. It is easy to see that the optimal value of the scaling factor $\xi$ is $\kappa(\bx_t,\dx_j) / \kappa(\dx_j,\dx_j)$, since the left-hand-side in the above expression is residual error of the projection of $\kappa(\bx_t,\cdot)$ onto $\kappa(\dx_j,\cdot)$ (in the same spirit as the approximation criterion). This allows to simplify the condition \eqref{eq:dist.crit} to get
\begin{equation}\label{eq:dist.crit1}
    \min_{j=1\cdots m}
    \left(
    \kappa(\bx_t,\bx_t) - \frac{\kappa(\bx_t,\dx_j)^2}{\kappa(\dx_j,\dx_j)} 
    \right) \geq \delta^2.
\end{equation}
The resulting dictionary, called $\delta$-distant, satisfies for any pair $(\dx_i, \dx_j)$:
\begin{equation}\label{eq:dist}
    \kappa(\dx_i,\dx_i) - \frac{\kappa(\dx_i,\dx_j)^2}{\kappa(\dx_j,\dx_j)} \geq \delta^2.
\end{equation}
For unit-norm atoms, this expression reduces to the condition $|\kappa(\dx_i,\dx_j)| \leq \sqrt{1-\delta^2}$. This sparsification criterion has been extensively used in the literature under different names, such as the novelty criterion proposed in \cite{Platt91} (where the scaling factor was dropped and a prediction error mechanism was included in a second stage; see also \cite{Rosipal97RAN,Huang05ageneralized}) and the quantized criterion defined in \cite{QuantizedKLMS}.

\subsection{Coherence criterion} \label{sec:coherence}

The coherence measure has been extensively studied in the literature of compressed sensing in the particular case of the linear kernel with unit-norm samples \cite{Tro04,Elad2010book}. In the more general case with the kernel formalism, the coherence of a dictionary is defined with the measure
\begin{equation}\label{eq:coher}
	\mathop{\max_{i,j=1\cdots m}}_{i \neq j} 
	\frac{|{\kappa(\dx_i,\dx_j)}|} {\sqrt{\kappa(\dx_i,\dx_i) \, \kappa(\dx_j,\dx_j)}}
	,
\end{equation}
which corresponds to the largest value of the cosine angle between any pair of atoms, since the above objective function can be written as
\begin{equation*}
\frac{|\psH{\kappa(\dx_i,\cdot)}{\kappa(\dx_j,\cdot)}|}
        {\normH{\kappa(\dx_i,\cdot)} \normH{\kappa(\dx_j,\cdot)}}.
\end{equation*}
The coherence criterion introduced in \cite{Hon07.isit,Ric09.tsp} constructs a dictionary with atoms that are mutually least coherent, by restricting this measure below some predefined value $\gamma \in \; [ 0 \; ; 1]$, where a null value yields an orthogonal basis. The criterion includes the current kernel function $\kappa(\bx_t,\cdot)$ in the dictionary if
\begin{equation}\label{eq:coher.crit}
	\max_{j=1 \cdots m}
	\frac{|{\kappa(\bx_t,\dx_j)}|} {\sqrt{\kappa(\bx_t,\bx_t) \, \kappa(\dx_j,\dx_j)}}
	\leq \gamma.
\end{equation}
It is worth noting that the denominator in each of the above expressions reduces to 1 when dealing with unit-norm atoms, thus expression \eqref{eq:coher.crit} becomes
\begin{equation*}
	\max_{j=1\cdots m} |\kappa(\bx_t,\dx_j)| \leq \gamma.
\end{equation*}

\subsection{Babel criterion}\label{sec:Babel}

While the coherence measure examines the largest correlation between all pairs of atoms in a dictionary, a more thorough analysis is provided by the Babel measure, which considering the maximum cumulative correlation between an atom and all the atoms of the dictionary \cite{Gil03b,Tro04}. The Babel criterion for online sparsification is defined as follows: the current kernel function $\kappa(\bx_t,\cdot)$ is included in the dictionary if
\begin{equation}\label{eq:babel.crit}
      \sum_{j=1}^m |\kappa(\bx_t, \dx_j)| \leq \gamma,
\end{equation}
for a given positive threshold $\gamma$ \cite{Fan2014}. The resulting dictionary, called $\gamma$-Babel, satisfies
\begin{equation}\label{eq:babelD}
      \max_{i=1\cdots m} \mathop{\sum_{j=1}^m}_{j \neq i} |\kappa(\dx_i, \dx_j)|
      \leq \gamma.
\end{equation}
By analogy with the coherence measure, which corresponds to the $\infty$-norm when dealing with unit-norm atoms, the Babel measure\footnote{One can also consider a normalized version of the Babel measure, by substituting $\kappa(\bx_t, \dx_j)$ in \eqref{eq:babel.crit} with ${{\kappa(\bx_t,\dx_j)}}/{\sqrt{\kappa(\bx_t,\bx_t) \, \kappa(\dx_j,\dx_j)}}$. These two definitions are equivalent when dealing with unit-norm atoms. To the best of our knowledge, this formulation is not used in the literature. Moreover, it looses the matrix-norm notion.} is related to the $1$-norm of the Gram matrix, where $\|\dK\|_1 = \max_i \sum_j |\kappa(\dx_i, \dx_j)|$.



\section{Approximation error of any sample}\label{sec:approx.error}

In this section, we study the elementary issue of approximating a sample by the span of a sparse dictionary, in the kernel-based framework. To this end, this issue is considered in its two folds: one the one hand, the error of approximating a discarded sample, and on the other hand, the error of approximating any accepted sample, namely approximating any atom of the dictionary with all the other atoms. We provide upper bounds on the former and lower bounds on the latter, for each of the sparsification criteria studied in previous section. It is worth noting that only the approximation criterion relies on a duality of discarding and accepting samples at the very same value in thresholding the approximation error, which is not the case of the other criteria, as examined in the following.

Let $\dic{\cp{P}}$ be the projection operator onto the subspace spanned by the atoms $\kappa(\dx_1,\cdot), \ldots, \kappa(\dx_m,\cdot)$ of a dictionary resulting from a sparsification criterion. Thus, for any sample $\bx$, the projection of the kernel function $\kappa(\bx,\cdot)$ onto this subspace is given by $\dic{\cp{P}}\kappa(\bx,\cdot)$. The quadratic norm of the latter corresponds to the maximum inner product $\psH{\kappa(\bx,\cdot)}{\varphi(\cdot)}$ over all the unit-norm functions $\varphi(\cdot)$ of that subspace. By writing $\varphi(\cdot) = \sum_{j=1}^m \beta_j \kappa(\dx_j,\cdot) / \normH{\sum_{j=1}^m \beta_j \kappa(\dx_j,\cdot)}$, one gets
\begin{align}\label{eq:projection}
	\normH{\dic{\cp{P}}\kappa(\bx,\cdot)}^2
	&= \max_\bbeta \frac{\psH{\sum_{j=1}^m \beta_j \kappa(\dx_j,\cdot)}{\kappa(\bx,\cdot)}}{\normH{\sum_{j=1}^m \beta_j \kappa(\dx_j,\cdot)}} \nonumber
\\	&= \max_\bbeta \frac{\sum_{j=1}^m \beta_j \kappa(\bx,\dx_j)}{\normH{\sum_{j=1}^m \beta_j \kappa(\dx_j,\cdot)}}.
\end{align}
Moreover, the Pythagorean Theorem allows to measure the residual norm of this projection, with
\begin{align*}
	\normH{(\bI - \dic{\cp{P}})\kappa(\bx,\cdot)}^2
	&= \normH{\kappa(\bx,\cdot)}^2 - \normH{\dic{\cp{P}}\kappa(\bx,\cdot)}^2
\\
	&= \kappa(\bx,\bx) - \normH{\dic{\cp{P}}\kappa(\bx,\cdot)}^2,
\end{align*}
where $\bI$ is the identity operator. Therefore, the quadratic approximation error is
\begin{equation}\label{eq:Pythagorean}
	\normH{(\bI - \dic{\cp{P}})\kappa(\bx,\cdot)}^2
	=  \kappa(\bx,\bx) -
	 \max_\bbeta \frac{\sum_{j=1}^m \beta_j \kappa(\bx,\dx_j)}{\normH{\sum_{j=1}^m \beta_j \kappa(\dx_j,\cdot)}}.
\end{equation}
In the following, we investigate this expression in order to derive proper bounds on the approximation error, either when the sample is discarded or when it already belongs to the dictionary.

\subsection{Approximation error of discarded samples}
\label{sec:approx.error.discard}

When the sample $\bx_t$ is discarded, we propose to upper bound the quadratic approximation error \eqref{eq:Pythagorean} with 
\begin{equation}\label{eq:error.discard}
	\normH{(\bI - \dic{\cp{P}})\kappa(\bx_t,\cdot)}^2
	 \leq \kappa(\bx_t,\bx_t) -
	 \max_j \frac{ |\kappa(\bx_t,\dx_j)|}{\sqrt{\kappa(\dx_j,\dx_j)}},
\end{equation}
where the inequality corresponds to the special choice of the coefficients, with $\beta_1= \ldots = \beta_m=0$ except for $\beta_j = \mathrm{sign}(\kappa(\bx_t,\dx_j))$. Next, we show that the quotient ${|\kappa(\bx_t,\dx_j)|}/{\sqrt{\kappa(\dx_j,\dx_j)}}$ in the above expression is bounded, with a lower bound that depends on the threshold of the investigated sparsification criterion. For this purpose, we examine separately the distance (Theorem~\ref{th:error.discard.dist}), the coherence (Theorem~\ref{th:error.discard.coher}), and the Babel (Theorem~\ref{th:error.discard.Babel}) criteria.

\medskip

\begin{theorem}[Discarding error for the distance criterion]\label{th:error.discard.dist}
	Let $\bx_t$ be a sample not satisfying the distance condition \eqref{eq:dist.crit1} for some given threshold $\delta$. The quadratic error of approximating $\kappa(\bx_t,\cdot)$ with a linear combination of atoms from the resulting dictionary is upper-bounded by 
	$$\delta^2 
	\qquad \textrm{and} \qquad
	\kappa(\bx_t,\bx_t) - \sqrt{\kappa(\bx_t,\bx_t) - \delta^2}.$$
	The latter upper bound is sharper when $\delta^2 > \kappa(\bx_t,\bx_t) - 1$. This is the case when dealing with unit-norm atoms, where we get the upper bound $1 - \sqrt{1 - \delta^2}$.
\end{theorem}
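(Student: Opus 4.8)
The plan is to obtain both upper bounds from a single observation: for a discarded sample, the atom $\dx_{j^\star}$ that makes the distance condition \eqref{eq:dist.crit1} fail already forces the \emph{one-dimensional} projection error below $\delta^2$, and one can then play this relaxation off against the second relaxation \eqref{eq:error.discard}. Throughout I would fix $j^\star = \argmin{j=1\cdots m}\big(\kappa(\bx_t,\bx_t) - \kappa(\bx_t,\dx_j)^2/\kappa(\dx_j,\dx_j)\big)$; this same index maximises $|\kappa(\bx_t,\dx_j)|/\sqrt{\kappa(\dx_j,\dx_j)}$ over $j$, since both expressions are monotone in $\kappa(\bx_t,\dx_j)^2/\kappa(\dx_j,\dx_j)$.

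For the bound $\delta^2$, I would argue as follows. Because $\bx_t$ violates \eqref{eq:dist.crit1}, $\kappa(\bx_t,\bx_t) - \kappa(\bx_t,\dx_{j^\star})^2/\kappa(\dx_{j^\star},\dx_{j^\star}) < \delta^2$. Since $\kappa(\dx_{j^\star},\cdot)$ lies in the span of the dictionary, the projection onto that span is no shorter than the projection onto $\kappa(\dx_{j^\star},\cdot)$ alone, so that $\normH{\dic{\cp{P}}\kappa(\bx_t,\cdot)}^2 \geq \kappa(\bx_t,\dx_{j^\star})^2/\kappa(\dx_{j^\star},\dx_{j^\star})$; feeding this into the Pythagorean identity \eqref{eq:Pythagorean} gives $\normH{(\bI-\dic{\cp{P}})\kappa(\bx_t,\cdot)}^2 \leq \kappa(\bx_t,\bx_t) - \kappa(\bx_t,\dx_{j^\star})^2/\kappa(\dx_{j^\star},\dx_{j^\star}) < \delta^2$.

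For the bound $\kappa(\bx_t,\bx_t) - \sqrt{\kappa(\bx_t,\bx_t)-\delta^2}$, I would rearrange the violated condition into $\kappa(\bx_t,\dx_{j^\star})^2/\kappa(\dx_{j^\star},\dx_{j^\star}) > \kappa(\bx_t,\bx_t) - \delta^2$. Provided $\delta^2 \leq \kappa(\bx_t,\bx_t)$ (otherwise $\normH{(\bI-\dic{\cp{P}})\kappa(\bx_t,\cdot)}^2 \leq \kappa(\bx_t,\bx_t) < \delta^2$ and this second bound is vacuous), taking square roots yields $|\kappa(\bx_t,\dx_{j^\star})|/\sqrt{\kappa(\dx_{j^\star},\dx_{j^\star})} > \sqrt{\kappa(\bx_t,\bx_t)-\delta^2}$. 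Combining this with \eqref{eq:error.discard}, whose maximum is attained at $j^\star$, gives $\normH{(\bI-\dic{\cp{P}})\kappa(\bx_t,\cdot)}^2 \leq \kappa(\bx_t,\bx_t) - |\kappa(\bx_t,\dx_{j^\star})|/\sqrt{\kappa(\dx_{j^\star},\dx_{j^\star})} < \kappa(\bx_t,\bx_t) - \sqrt{\kappa(\bx_t,\bx_t)-\delta^2}$.

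Finally I would compare the two: with $u := \kappa(\bx_t,\bx_t) - \delta^2 \in [0,\kappa(\bx_t,\bx_t))$ the bounds read $\kappa(\bx_t,\bx_t) - u$ and $\kappa(\bx_t,\bx_t) - \sqrt u$, so the second is the smaller precisely when $\sqrt u > u$, that is when $u < 1$, that is when $\delta^2 > \kappa(\bx_t,\bx_t) - 1$; for unit-norm atoms $\kappa(\bx_t,\bx_t) = 1$, so this holds for every $\delta \in (0,1)$ and the second bound collapses to $1 - \sqrt{1-\delta^2}$. The only genuinely delicate step is the square-root passage for the second bound: one must restrict to $\delta^2 \leq \kappa(\bx_t,\bx_t)$ so that the radicand is non-negative, and must verify that the atom realising the violated distance condition is also the one realising the maximum in \eqref{eq:error.discard}, so that the two relaxations can legitimately be chained; everything else is elementary algebra.
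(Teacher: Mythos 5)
Your proposal is correct and follows essentially the same route as the paper: the bound $\delta^2$ comes from relaxing the full projection to a single-atom projection, the bound $\kappa(\bx_t,\bx_t)-\sqrt{\kappa(\bx_t,\bx_t)-\delta^2}$ comes from rearranging the violated condition and inserting it into \eqref{eq:error.discard}, and the comparison reduces to $u^2<u \iff u<1$. Your explicit handling of the case $\delta^2>\kappa(\bx_t,\bx_t)$ (where the second bound is vacuous) is a small point of care that the paper glosses over, but it does not change the argument.
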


\begin{proof}
Firstly, one can easily derive the first expression of the upper bound, since
\begin{align*}
    \min_{\xi_1\cdots\xi_m} & \normHbig{\kappa(\bx_t,\cdot)  - \sum_{i=1}^m \xi_i \,\kappa(\dx_i,\cdot)}^2 
\\ & \qquad\qquad\qquad \leq 
    \min_{j=1\cdots m} \min_{\xi_j}\normHbig{\kappa(\bx_t,\cdot) - \xi_j \, \kappa(\dx_j,\cdot)}^2 
\\ & \qquad\qquad\qquad  < \delta^2,
\end{align*}
where the first inequality follows from the special case when all $\xi_i$ are null except for a single one, and the second inequality is due to the violation of \eqref{eq:dist.crit}.

Secondly, the second expression of the upper bound is a bit more trickier. The approximation error is given by the norm of the residual of the projection of $\kappa(\bx_t,\cdot)$ onto the subspace spanned by the dictionary atoms, namely as given in \eqref{eq:error.discard}. 
Since $\bx_t$ does not satisfy the condition \eqref{eq:dist.crit}-\eqref{eq:dist.crit1}, 
we have
\begin{equation*}
    \min_{j=1\ldots m} \left( \kappa(\bx_t,\bx_t) - \frac{\kappa(\bx_t,\dx_j)^2}{\kappa(\dx_j,\dx_j)}  \right)
    < \delta^2,
\end{equation*}
and as a consequence, since $\kappa(\bx_t,\bx_t) \geq \delta^2$, we can easily show that
\begin{equation*}
    \sqrt{\kappa(\bx_t,\bx_t) - \delta^2}
    < \max_{j=1\ldots m} \frac{|\kappa(\bx_t,\dx_j)|}{\sqrt{\kappa(\dx_j,\dx_j)}}.
\end{equation*}
By injecting this inequality in \eqref{eq:error.discard}, we get the second expression of the upper bound with
\begin{equation*}
	\normHbig{\big(\bI - \dic{\cp{P}} \big)\kappa(\bx_t,\cdot)}^2
	< \kappa(\bx_t,\bx_t) - \sqrt{\kappa(\bx_t,\bx_t) - \delta^2}.
\end{equation*}	

Finally, we compare these two expressions. It is easy to see that $\kappa(\bx_t,\bx_t) - \sqrt{\kappa(\bx_t,\bx_t) - \delta^2}$ is sharper than $\delta^2$ when
$$\kappa(\bx_t,\bx_t) - \sqrt{\kappa(\bx_t,\bx_t) - \delta^2}
	< \delta^2,$$
namely when $\kappa(\bx_t,\bx_t) - \delta^2 < \sqrt{\kappa(\bx_t,\bx_t) - \delta^2}$. This condition, of the form $u^2 < u$, is satisfied when $u<1$, namely $\kappa(\bx_t,\bx_t) - \delta^2 < 1$.
\end{proof}

\begin{theorem}[Discarding error for the coherence criterion]\label{th:error.discard.coher}
	Let $\bx_t$ be a sample not satisfying the coherence condition \eqref{eq:coher.crit} for some given threshold $\gamma$. The quadratic error of approximating $\kappa(\bx_t,\cdot)$ with a linear combination of atoms from the resulting dictionary is upper-bounded by 
	$$\kappa(\bx_t,\bx_t)-\gamma \sqrt{\kappa(\bx_t,\bx_t)}.$$ 
	In the particular case of unit-norm atoms, we get $1 - \gamma$.
\end{theorem}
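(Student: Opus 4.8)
The plan is to derive the bound directly from the generic discarding inequality \eqref{eq:error.discard} together with the failure of the inclusion test \eqref{eq:coher.crit}; essentially all of the substantive work has already been done in setting up \eqref{eq:Pythagorean}--\eqref{eq:error.discard}, so the argument is short.

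First I would unpack what it means for $\bx_t$ to be discarded, i.e.\ to violate \eqref{eq:coher.crit}: there exists an index $j \in \{1,\dots,m\}$ with $|\kappa(\bx_t,\dx_j)| \big/ \sqrt{\kappa(\bx_t,\bx_t)\,\kappa(\dx_j,\dx_j)} > \gamma$. Since $\kappa(\bx_t,\bx_t)>0$, this rearranges to
\[
    \max_{j=1\cdots m} \frac{|\kappa(\bx_t,\dx_j)|}{\sqrt{\kappa(\dx_j,\dx_j)}} \;>\; \gamma\,\sqrt{\kappa(\bx_t,\bx_t)} .
\]
Next I would substitute this lower bound into the right-hand side of \eqref{eq:error.discard} --- recall that \eqref{eq:error.discard} comes from \eqref{eq:Pythagorean} by choosing $\bbeta$ supported on a single, best atom with $\beta_j=\mathrm{sign}\big(\kappa(\bx_t,\dx_j)\big)$ --- which immediately yields
\[
    \normHbig{(\bI-\dic{\cp{P}})\kappa(\bx_t,\cdot)}^2 \;<\; \kappa(\bx_t,\bx_t) - \gamma\,\sqrt{\kappa(\bx_t,\bx_t)},
\]
the asserted upper bound.

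For the unit-norm case I would simply note $\kappa(\bx_t,\bx_t)=1$, so the bound collapses to $1-\gamma$. I do not expect a genuine obstacle here: the only mildly delicate point is that passing between the normalized and unnormalized forms of the coherence condition requires $\kappa(\bx_t,\bx_t)>0$ (guaranteed whenever the sample is nontrivial, with $r^2=\inf_{\bx}\kappa(\bx,\bx)$ as the relevant floor), and the chain from the sample-wise criterion to the approximation-error bound is otherwise purely algebraic --- in contrast with the distance case, there is no competing bound to compare against and hence no case analysis.
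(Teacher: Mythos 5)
Your proposal is correct and follows essentially the same route as the paper's own proof: rewrite the violated coherence condition as $\max_j |\kappa(\bx_t,\dx_j)|/\sqrt{\kappa(\dx_j,\dx_j)} > \gamma\sqrt{\kappa(\bx_t,\bx_t)}$ and inject it into \eqref{eq:error.discard}. Nothing is missing.
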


\begin{proof}
The unfulfilled coherence condition \eqref{eq:coher.crit}, namely
\begin{equation*}
	\max_{j=1\cdots m} \frac{|\kappa(\bx_t,\dx_j)|}
	{\sqrt{\kappa(\bx_t,\bx_t) \, \kappa(\dx_j,\dx_j)}} > \gamma,
\end{equation*}
can be written in the equivalent form
\begin{equation*}
	\max_{j=1\cdots m} \frac{|\kappa(\bx_t,\dx_j)|}
	{\sqrt{\kappa(\dx_j,\dx_j)}} > \gamma \sqrt{\kappa(\bx_t,\bx_t)}.
\end{equation*}
By injecting this inequality in \eqref{eq:error.discard}, we get an upper bound on the approximation error as follows:
\begin{align*}
	\normH{(\bI - \dic{\cp{P}})\kappa(\bx_t,\cdot)}^2
	&< \kappa(\bx_t,\bx_t) - \gamma \, \sqrt{\kappa(\bx_t,\bx_t)},
\end{align*}
which concludes the proof.
\end{proof}

\begin{theorem}[Discarding error for the Babel criterion]\label{th:error.discard.Babel}
	Let $\bx_t$ be a sample not satisfying the Babel condition \eqref{eq:babel.crit} for some given threshold $\gamma$. The quadratic error of approximating $\kappa(\bx_t,\cdot)$ with a linear combination of atoms from the resulting dictionary is upper-bounded by
	$$\kappa(\bx_t,\bx_t) - \frac{\gamma}{\sqrt{m(R^2+\gamma)}},$$ 
	which becomes $1 - \frac{\gamma}{\sqrt{m(1+\gamma)}}$ for unit-norm atoms.
\end{theorem}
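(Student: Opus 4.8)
The plan is to plug a single, well-chosen coefficient vector $\bbeta$ into the projection identity \eqref{eq:projection}--\eqref{eq:Pythagorean}, exactly as in the proofs of Theorems~\ref{th:error.discard.dist} and \ref{th:error.discard.coher}, but now exploiting the \emph{cumulative} nature of the Babel condition instead of a single dominant atom. Concretely, I would take $\beta_j = \mathrm{sign}\big(\kappa(\bx_t,\dx_j)\big)$ for \emph{all} $j = 1,\ldots,m$ (rather than concentrating the mass on one index as was done to obtain \eqref{eq:error.discard}). With this choice the numerator in \eqref{eq:projection} becomes $\sum_{j=1}^m \beta_j\,\kappa(\bx_t,\dx_j) = \sum_{j=1}^m |\kappa(\bx_t,\dx_j)|$, which is strictly larger than $\gamma$ precisely because $\bx_t$ violates the Babel condition \eqref{eq:babel.crit}; in particular it is positive, so the ratio is well defined.

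Next I would upper-bound the denominator $\normH{\sum_{j=1}^m \beta_j\,\kappa(\dx_j,\cdot)}$. Expanding the squared norm through the kernel trick gives $\sum_{i,j}\beta_i\beta_j\,\kappa(\dx_i,\dx_j)$, which I split into its diagonal and off-diagonal parts. Since $|\beta_i|\le 1$ and $\kappa(\dx_i,\dx_i)\le R^2$, the diagonal part $\sum_i \beta_i^2\,\kappa(\dx_i,\dx_i)$ is at most $mR^2$; bounding each off-diagonal term by its absolute value and using the $\gamma$-Babel property \eqref{eq:babelD}, namely $\sum_{j\neq i}|\kappa(\dx_i,\dx_j)|\le\gamma$ for every $i$, the off-diagonal part is at most $m\gamma$. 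Hence $\normH{\sum_{j=1}^m \beta_j\,\kappa(\dx_j,\cdot)}^2 \le m(R^2+\gamma)$.

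Combining the two estimates in \eqref{eq:projection} gives $\normH{\dic{\cp{P}}\kappa(\bx_t,\cdot)}^2 \ge \sum_{j=1}^m|\kappa(\bx_t,\dx_j)| \big/ \normH{\sum_{j=1}^m \beta_j\,\kappa(\dx_j,\cdot)} > \gamma\big/\sqrt{m(R^2+\gamma)}$, and then the Pythagorean identity $\normH{(\bI-\dic{\cp{P}})\kappa(\bx_t,\cdot)}^2 = \kappa(\bx_t,\bx_t) - \normH{\dic{\cp{P}}\kappa(\bx_t,\cdot)}^2$ yields the claimed bound $\kappa(\bx_t,\bx_t) - \gamma/\sqrt{m(R^2+\gamma)}$. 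Specializing to unit-norm atoms ($R=1$ and $\kappa(\bx_t,\bx_t)=1$) gives $1 - \gamma/\sqrt{m(1+\gamma)}$.

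The only delicate step is the denominator estimate: the temptation is to bound each $|\kappa(\dx_i,\dx_j)|$ individually, but that introduces an $m^2$ factor and loses the point; the right move is to apply the row-wise cumulative bound \eqref{eq:babelD}, which is exactly the defining property of a $\gamma$-Babel dictionary and is what keeps the denominator linear in $m$. Everything else is a direct substitution into the already-established identities, so I do not anticipate further obstacles.
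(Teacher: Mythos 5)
Your proposal is correct and follows the same strategy as the paper: substitute the test vector $\beta_j = \mathrm{sign}(\kappa(\bx_t,\dx_j))$ into \eqref{eq:projection}--\eqref{eq:Pythagorean}, lower-bound the numerator by $\gamma$ using the violated Babel condition \eqref{eq:babel.crit}, and upper-bound the denominator by $\sqrt{m(R^2+\gamma)}$. The only divergence is in how that denominator bound is obtained: the paper writes $\bbeta^\top \dK \bbeta \leq \dic{\lambda}_1\|\bbeta\|^2$ via the Rayleigh--Ritz quotient and then invokes the Ger\v{s}gorin-based eigenvalue bound $\dic{\lambda}_1 \leq R^2+\gamma$ of Lemma~\ref{th:eigen.babel}, whereas you expand $\sum_{i,j}\beta_i\beta_j\kappa(\dx_i,\dx_j)$ directly and bound the diagonal by $mR^2$ and the off-diagonal rows by $\gamma$ each using \eqref{eq:babelD}. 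Your route is more elementary and self-contained (it bypasses the Appendix lemma entirely), while the paper's route reuses a general eigenvalue bound that also serves Theorem~\ref{sec:approx.atom.babel}; both yield exactly the same constant $m(R^2+\gamma)$.
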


\begin{proof}
To prove this result, we use the quadratic approximation error given in expression \eqref{eq:Pythagorean} where, for the particular case of $\beta_j = \mathrm{sign}(\kappa(\bx_t,\dx_j))$, we get
\begin{align*}
	\normH{(\bI - \dic{\cp{P}})\kappa(\bx_t,\cdot)}^2
	&=  \kappa(\bx_t,\bx_t) -
	 \max_\bbeta \frac{\sum_{j=1}^m \beta_j \kappa(\bx_t,\dx_j)}{\normH{\sum_{j=1}^m \beta_j \kappa(\dx_j,\cdot)}}
\\	&\leq \kappa(\bx_t,\bx_t) -
	  \frac{\sum_{j=1}^m |\kappa(\bx_t,\dx_j)|}{(\bbeta^\top \! \dK \bbeta)^{\frac12}}
\end{align*}
The above numerator is bounded since the Babel condition \eqref{eq:babel.crit} is not satisfied, namely $\sum_{j=1}^m |\kappa(\bx_t, \dx_j)| > \gamma$. The above denominator is bounded thanks to the min-max theorem (\ie, the Rayleigh-Ritz quotient), with
\begin{equation*}
	\bbeta^\top \! \dK \bbeta \leq \dic{\lambda}_1 \| \bbeta \|^2.
\end{equation*}
It turns out that the this upper bound is equal to $m (R^2+\gamma)$. To show this, on the one hand, we have $\sum_{j=1}^m \beta_j^2 = \sum_{j=1}^m |\mathrm{sign} (\kappa(\bx_t,\dx_j))|^2 = m$ due to the aforementioned particular choice of the $\beta_j$. On the other hand, the eigenvalues of a Gram matrix $\dK$ associated to a $\gamma$-Babel dictionary are upper-bounded by $R^2+\gamma$, as given in the Appendix; see \cite[Theorem~5]{14.sparse.eigen} for more details. The combination of all these results concludes the proof.
\end{proof}

\subsection{Approximation error of an atom from the dictionary}\label{sec:approx.atom}

In this section, we study the approximation of an atom of a dictionary with a linear combination of its other atoms. We provide a lower bound on the approximation error for each sparsification criterion.

Let $\kappa(\dx_i,\cdot)$ be an atom of the dictionary, and consider its projection onto the subspace spanned by the other $m-1$ atoms. By following the same derivations as in the beginning of Section~\ref{sec:approx.error}, we have 
\begin{equation*}
	\normH{(\bI - \dic{\cp{P}})\kappa(\dx_i,\cdot)}^2
	=  \kappa(\dx_i,\dx_i) -
	 \max_\bbeta \frac{\sum_{j=1,j \neq i}^m \beta_j \kappa(\dx_i,\dx_j)}{\normH{\sum_{j=1,j \neq i}^m \beta_j \kappa(\dx_j,\cdot)}}.
\end{equation*}
On the one hand, the numerator in the above expression is upper-bounded, since we have from the Cauchy-Schwarz inequality:
\begin{equation*}
	\Biggr( \mathop{\sum_{j=1}^m}_{j \neq i} \beta_j \kappa(\dx_i,\dx_j) \Biggr)^2
	 \leq \mathop{\sum_{j=1}^m}_{j \neq i} \beta_j^2
		~ \mathop{\sum_{j=1}^m}_{j \neq i} \left| \kappa(\dx_i,\dx_j) \right|^2.
\end{equation*}
On the other hand, the denominator has a lower bound, since we have:
 \begin{equation*}
	\normHBig{\mathop{\sum_{j=1}^m}_{j \neq i} \beta_j \kappa(\dx_j,\cdot)}^2
	= \bbeta^\top \! \dK_{\!_{\setminus\!\{\!i\!\}}\!} \bbeta \geq \dic{\lambda}_{_{\setminus\!\{\!i\!\}}\!m-1} \|\bbeta \|^2,
\end{equation*}
where $\dK_{\!_{\setminus\!\{\!i\!\}}\!}$ is the $(m-1)$-by-$(m-1)$ submatrix of the Gram matrix $\dK$ obtained by removing its $i$-th row and its $i$-th column, \ie, the entries associated to $\dx_i$, and $\dic{\lambda}_{_{\setminus\!\{\!i\!\}}\!m-1}$ is its smallest eigenvalue. By combining these two inequalities, we get
\begin{equation}\label{eq:approx.error.L2}
	\normH{(\bI - \dic{\cp{P}})\kappa(\dx_i,\cdot)}^2
	\geq \kappa(\dx_i,\dx_i) -
	 \sqrt{\frac{1}{\dic{\lambda}_{_{\setminus\!\{\!i\!\}}\!m-1}}
	 \mathop{\sum_{j=1}^m}_{j \neq i} | \kappa(\dx_i,\dx_j) |^2},
\end{equation}
This expression is investigated in the following for the distance (Theorem~\ref{th:error.atom.dist}), the coherence (Theorem~\ref{th:error.atom.coh}), and the Babel (Theorem~\ref{sec:approx.atom.babel}) criteria. For each sparsification criterion, the above lower bound is written by using the corresponding summation expression and the appropriate lower bound on the eigenvalues, as derived in \cite[Section~IV]{14.sparse.eigen} and put in a nutshell in the Appendix.

\begin{theorem}[Acceptance error for the distance criterion]\label{th:error.atom.dist}
	For a dictionary resulting from the distance criterion for some given threshold $\delta$, the quadratic error of approximating any atom $\kappa(\dx_i,\cdot)$ with a linear combination of the other atoms is lower-bounded by 
\begin{equation*}
	\kappa(\dx_i,\dx_i) - \sqrt{\frac{\big(\kappa(\dx_i,\dx_i) - \delta^2\big) (m-1)R^2}{r^2 - (m-2)R\sqrt{R^2-\delta^2}}}.
\end{equation*}
For unit-norm atoms, we get a lower bound for all atoms, with 
$$1 - \sqrt{\frac{(m-1)(1-\delta^2)}{1 - (m-2)\sqrt{1-\delta^2}}}.$$
\end{theorem}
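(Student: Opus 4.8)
The plan is to depart from the generic lower bound \eqref{eq:approx.error.L2} and to control its two ingredients separately, invoking only the $\delta$-distant property \eqref{eq:dist} of the dictionary: an upper bound on the cumulative squared inner product $\sum_{j\neq i}|\kappa(\dx_i,\dx_j)|^2$ that sits in the numerator, and a lower bound on the smallest eigenvalue $\dic{\lambda}_{_{\setminus\!\{\!i\!\}}\!m-1}$ of the submatrix $\dK_{\!_{\setminus\!\{\!i\!\}}\!}$ that sits in the denominator. For the numerator, I would rearrange \eqref{eq:dist} into $\kappa(\dx_i,\dx_j)^2 \le \kappa(\dx_j,\dx_j)\,\big(\kappa(\dx_i,\dx_i)-\delta^2\big)$ and then use $\kappa(\dx_j,\dx_j)\le R^2$ to obtain $\kappa(\dx_i,\dx_j)^2 \le R^2\big(\kappa(\dx_i,\dx_i)-\delta^2\big)$ for every $j\neq i$; summing over the $m-1$ such indices gives
\[
  \mathop{\sum_{j=1}^m}_{j\neq i} |\kappa(\dx_i,\dx_j)|^2 \;\le\; (m-1)\,R^2\,\big(\kappa(\dx_i,\dx_i)-\delta^2\big),
\]
which is exactly the numerator appearing in the claimed bound (nonnegative since an accepted atom satisfies $\kappa(\dx_i,\dx_i)\ge\delta^2$).

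For the denominator, the same rearrangement of \eqref{eq:dist}, combined with $\kappa(\dx_i,\dx_i)-\delta^2\le R^2-\delta^2$, shows that every off-diagonal entry of $\dK_{\!_{\setminus\!\{\!i\!\}}\!}$ obeys $|\kappa(\dx_j,\dx_k)|\le R\sqrt{R^2-\delta^2}$, while its diagonal entries are at least $r^2$. Since $\dK_{\!_{\setminus\!\{\!i\!\}}\!}$ is $(m-1)$-by-$(m-1)$, each of its rows has $m-2$ off-diagonal entries, so Gershgorin's disc theorem — equivalently the eigenvalue bound for a $\delta$-distant Gram matrix recalled in the Appendix, cf.\ \cite[Section~IV]{14.sparse.eigen} — yields
\[
  \dic{\lambda}_{_{\setminus\!\{\!i\!\}}\!m-1} \;\ge\; r^2 - (m-2)\,R\sqrt{R^2-\delta^2}.
\]
Substituting both estimates into \eqref{eq:approx.error.L2} and noting that replacing $\dic{\lambda}_{_{\setminus\!\{\!i\!\}}\!m-1}$ by the smaller (assumed positive) quantity $r^2-(m-2)R\sqrt{R^2-\delta^2}$ only enlarges the subtracted square root — hence only weakens the inequality — delivers precisely the stated lower bound. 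Finally, setting $R=r=1$ and $\kappa(\dx_i,\dx_i)=1$ collapses the expression to the unit-norm form, with the bound then holding uniformly over all atoms.

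The main obstacle is the eigenvalue estimate on the submatrix. One has to be careful that deleting the $i$-th row and column leaves $m-2$ off-diagonal terms per row rather than $m-1$, and one must explicitly restrict to the regime $r^2 > (m-2)R\sqrt{R^2-\delta^2}$, i.e.\ a distance threshold $\delta$ large enough relative to the dictionary size $m$, for $\dK_{\!_{\setminus\!\{\!i\!\}}\!}$ to be positive definite and the bound to be meaningful; outside this regime the Gershgorin lower bound becomes non-positive and the statement is vacuous. Everything else — the Cauchy--Schwarz step feeding \eqref{eq:approx.error.L2} and the two elementary bounds from \eqref{eq:dist} — is routine.
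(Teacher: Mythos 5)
Your proposal is correct and follows essentially the same route as the paper: it bounds the numerator of \eqref{eq:approx.error.L2} by $(m-1)R^2\big(\kappa(\dx_i,\dx_i)-\delta^2\big)$ via the rearranged distance condition \eqref{eq:dist}, and bounds the eigenvalue of $\dK_{\!_{\setminus\!\{\!i\!\}}\!}$ from below by applying the Ger\v{s}gorin-based Lemma~\ref{th:eigen.dist} to the $\delta$-distant sub-dictionary of $m-1$ atoms. Your explicit remark that the bound is only meaningful when $r^2 > (m-2)R\sqrt{R^2-\delta^2}$ is a point the paper leaves implicit, but it does not change the argument.
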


\begin{proof}
The proof is split in two parts, by investigating expression \eqref{eq:approx.error.L2}. Firstly, the summation term is upper-bounded since, from \eqref{eq:dist}, we have that any pair $(\dx_i,\dx_j)$ satisfies
\begin{align*}
	\mathop{\sum_{j=1}^m}_{j \neq i} |\kappa(\dx_i,\dx_j)|^2
& \leq \mathop{\sum_{j=1}^m}_{j \neq i} \kappa(\dx_j,\dx_j) \, \big( \kappa(\dx_i,\dx_i) - \delta^2 \big)
\\	& = \big(\kappa(\dx_i,\dx_i) - \delta^2\big) \mathop{\sum_{j=1}^m}_{j \neq i} \kappa(\dx_j,\dx_j)
\\	& = \big(\kappa(\dx_i,\dx_i) - \delta^2\big) (m-1)\mathop{\max_{j=1\cdots m}}_{j \neq i} \kappa(\dx_j,\dx_j)
\\	& = \big(\kappa(\dx_i,\dx_i) - \delta^2\big) (m-1)R^2.
\end{align*}
Secondly, the eigenvalue in this expression is lower-bounded by $r^2 - (m-2)R\sqrt{R^2-\delta^2}$ for a $\delta$-distant dictionary of $m-1$ atoms, as shown in Lemma~\ref{th:eigen.dist} of the Appendix.
\end{proof}

\begin{theorem}[Acceptance error for the coherence criterion]\label{th:error.atom.coh}
	For a dictionary resulting from the coherence criterion for some given threshold $\gamma$, the quadratic error of approximating any atom $\kappa(\dx_i,\cdot)$ with a linear combination of the other atoms is lower-bounded by 
\begin{equation*}
	\kappa(\dx_i,\dx_i) - \sqrt{\frac{(m-1) \, \gamma^2 R^2 \kappa(\dx_i,\dx_i)}{r^2 - (m-2)\gamma R^2}}.
\end{equation*}
For unit-norm atoms, this bounds becomes independent of $\dx_i$, with 
$$1 - \sqrt{\frac{(m-1) \, \gamma^2}{1 - (m-2) \, \gamma}}.$$
\end{theorem}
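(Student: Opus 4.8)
The plan is to specialize the general lower bound \eqref{eq:approx.error.L2} to a $\gamma$-coherent dictionary by bounding its two ingredients: the cumulative-square term $\sum_{j \neq i} |\kappa(\dx_i,\dx_j)|^2$ from above, and the smallest eigenvalue $\dic{\lambda}_{_{\setminus\!\{\!i\!\}}\!m-1}$ of the reduced Gram matrix from below. This mirrors exactly the structure of the proof of Theorem~\ref{th:error.atom.dist}, only with the coherence condition \eqref{eq:coher} in place of the distance condition \eqref{eq:dist}.

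\textbf{Step 1 (bounding the summation term).} From the coherence property of the dictionary, every pair satisfies $|\kappa(\dx_i,\dx_j)| \leq \gamma \sqrt{\kappa(\dx_i,\dx_i)\,\kappa(\dx_j,\dx_j)}$, hence $|\kappa(\dx_i,\dx_j)|^2 \leq \gamma^2 \kappa(\dx_i,\dx_i)\,\kappa(\dx_j,\dx_j) \leq \gamma^2 R^2 \kappa(\dx_i,\dx_i)$. Summing over the $m-1$ indices $j \neq i$ gives $\sum_{j \neq i} |\kappa(\dx_i,\dx_j)|^2 \leq (m-1)\,\gamma^2 R^2 \kappa(\dx_i,\dx_i)$, which is precisely the numerator appearing under the square root in the claimed bound.

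\textbf{Step 2 (bounding the eigenvalue from below).} I would invoke the eigenvalue estimate for $\gamma$-coherent dictionaries derived in \cite[Section~IV]{14.sparse.eigen} and recalled in the Appendix: the smallest eigenvalue of the $(m-1)$-by-$(m-1)$ Gram matrix of a $\gamma$-coherent dictionary is lower-bounded by $r^2 - (m-2)\gamma R^2$. The intuition is a Gershgorin-type argument — each diagonal entry is at least $r^2$, and each off-diagonal row-sum of absolute values is at most $(m-2)\gamma R^2$ using the pairwise coherence bound of Step~1 (without the extra square) — but I would simply cite the Appendix lemma rather than redo this. Substituting $\dic{\lambda}_{_{\setminus\!\{\!i\!\}}\!m-1} \geq r^2 - (m-2)\gamma R^2$ and the Step~1 bound into \eqref{eq:approx.error.L2} yields
\[
	\normH{(\bI - \dic{\cp{P}})\kappa(\dx_i,\cdot)}^2
	\geq \kappa(\dx_i,\dx_i) - \sqrt{\frac{(m-1)\,\gamma^2 R^2 \kappa(\dx_i,\dx_i)}{r^2 - (m-2)\gamma R^2}},
\]
since the function $t \mapsto \kappa(\dx_i,\dx_i) - \sqrt{t}$ is decreasing and we have enlarged the fraction (larger numerator, smaller positive denominator). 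The unit-norm specialization $r = R = 1$, $\kappa(\dx_i,\dx_i) = 1$ then collapses to $1 - \sqrt{(m-1)\gamma^2/(1-(m-2)\gamma)}$, independent of $\dx_i$ as asserted.

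\textbf{Main obstacle.} The only delicate point is ensuring the eigenvalue lower bound $r^2 - (m-2)\gamma R^2$ is positive, so that the square root is real and the division is legitimate; this is an implicit admissibility condition ($\gamma < r^2/((m-2)R^2)$) already inherent in the cited eigenvalue result, and the theorem statement is understood to hold in that regime. Everything else is a direct substitution with no genuine difficulty, the bulk of the work having been front-loaded into the generic inequality \eqref{eq:approx.error.L2} and the Appendix eigenvalue lemma.
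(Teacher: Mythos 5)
Your proposal is correct and follows essentially the same route as the paper's proof: bound $\sum_{j\neq i}|\kappa(\dx_i,\dx_j)|^2$ by $(m-1)\gamma^2 R^2 \kappa(\dx_i,\dx_i)$ via the coherence condition, invoke Lemma~\ref{th:eigen.coher} for the $(m-1)$-atom submatrix to get $\dic{\lambda}_{_{\setminus\!\{\!i\!\}}\!m-1} \geq r^2-(m-2)\gamma R^2$, and substitute into \eqref{eq:approx.error.L2}. Your added remark on the implicit positivity condition $r^2 > (m-2)\gamma R^2$ is a sensible observation that the paper leaves tacit.
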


\begin{proof}
The proof follows the same procedure as in the previous proof. On the one hand, we have 
\begin{align*}
	\mathop{\sum_{j=1}^m}_{j \neq i} | \kappa(\dx_i,\dx_j) |^2
	&\leq (m-1) \mathop{\max_{j=1\cdots m}}_{j \neq i} | \kappa(\dx_i,\dx_j) |^2
\\	&\leq (m-1) \, \gamma^2 \mathop{\max_{j=1\cdots m}}_{j \neq i} \kappa(\dx_i,\dx_i) \, \kappa(\dx_j,\dx_j)
\\	&\leq (m-1) \, \gamma^2 R^2 \kappa(\dx_i,\dx_i),
\end{align*}
where the second inequality follows from the coherence condition. On the other hand, we use the lower bound $r^2 - (m-2)\gamma R^2$ on the eigenvalues associated to a $\gamma$-coherent dictionary of $m-1$ atoms, as derived in Lemma~\ref{th:eigen.coher} of the Appendix. To complete the proof, we combine these results in \eqref{eq:approx.error.L2}.
\end{proof}

\begin{theorem}[Acceptance error for the Babel criterion]\label{sec:approx.atom.babel}
	For a dictionary resulting from the Babel criterion for some given threshold $\gamma$, the quadratic error of approximating any atom $\kappa(\dx_i,\cdot)$ with a linear combination of the other atoms is lower-bounded by 
\begin{equation*}
	\kappa(\dx_i,\dx_i) - \frac{\gamma}{\sqrt{r^2-\gamma}}.
\end{equation*}
For unit-norm atoms, we get the following lower bound for all atoms:
$$1 - \frac{\gamma}{\sqrt{1 - \gamma}}.$$
\end{theorem}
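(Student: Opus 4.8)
The plan is to specialize the general lower bound \eqref{eq:approx.error.L2} to a $\gamma$-Babel dictionary, mirroring exactly the two-step structure used in the proofs of Theorems~\ref{th:error.atom.dist} and~\ref{th:error.atom.coh}: first upper-bound the summation term $\sum_{j\neq i}|\kappa(\dx_i,\dx_j)|^2$ using the defining property \eqref{eq:babelD} of a $\gamma$-Babel dictionary, then lower-bound the smallest eigenvalue $\dic{\lambda}_{_{\setminus\!\{\!i\!\}}\!m-1}$ of the submatrix $\dK_{\!_{\setminus\!\{\!i\!\}}\!}$.

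For the numerator, I would argue as follows. Since the dictionary is $\gamma$-Babel, \eqref{eq:babelD} gives $\sum_{j\neq i}|\kappa(\dx_i,\dx_j)|\leq\gamma$; in particular every individual off-diagonal term satisfies $|\kappa(\dx_i,\dx_j)|\leq\gamma$, so
\[
\mathop{\sum_{j=1}^m}_{j\neq i}|\kappa(\dx_i,\dx_j)|^2
\;\leq\; \mathop{\max_{j=1\cdots m}}_{j\neq i}|\kappa(\dx_i,\dx_j)|\cdot\mathop{\sum_{j=1}^m}_{j\neq i}|\kappa(\dx_i,\dx_j)|
\;\leq\;\gamma^2 .
\]
For the denominator, I would invoke the Gershgorin-type lower bound $r^2-\gamma$ on the eigenvalues of the Gram matrix of a $\gamma$-Babel dictionary that is recalled in the Appendix (cf.\ \cite[Section~IV]{14.sparse.eigen}): each diagonal entry $\kappa(\dx_j,\dx_j)\geq r^2$ while each absolute off-diagonal row sum is at most $\gamma$. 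Deleting the $i$-th row and column of $\dK$ only shrinks the off-diagonal row sums, and $\dK_{\!_{\setminus\!\{\!i\!\}}\!}$ is itself the Gram matrix of an $(m-1)$-atom sub-dictionary that is still $\gamma$-Babel, hence $\dic{\lambda}_{_{\setminus\!\{\!i\!\}}\!m-1}\geq r^2-\gamma$.

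Substituting these two estimates into \eqref{eq:approx.error.L2} then gives
\[
\normH{(\bI-\dic{\cp{P}})\kappa(\dx_i,\cdot)}^2
\;\geq\; \kappa(\dx_i,\dx_i)-\sqrt{\frac{\gamma^2}{r^2-\gamma}}
\;=\;\kappa(\dx_i,\dx_i)-\frac{\gamma}{\sqrt{r^2-\gamma}},
\]
and setting $r=1$, $\kappa(\dx_i,\dx_i)=1$ yields the unit-norm statement $1-\gamma/\sqrt{1-\gamma}$. I do not anticipate a genuine obstacle here; the derivation is routine once the two building blocks are in place. The only point worth a remark is that, unlike the distance and coherence cases, no factor $(m-1)R^2$ appears: this is because the $\ell_1$-type Babel bound $\sum_{j\neq i}|\kappa(\dx_i,\dx_j)|\leq\gamma$ is already dimension-free, and so is the eigenvalue lower bound $r^2-\gamma$. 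One should also note that the bound is informative only in the regime $\gamma<r^2$, which is precisely the regime in which that eigenvalue bound certifies $\dK_{\!_{\setminus\!\{\!i\!\}}\!}\succ0$, i.e.\ the sub-dictionary is linearly independent and the quantity being bounded is strictly positive.
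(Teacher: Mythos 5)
Your proposal is correct and follows essentially the same route as the paper: both specialize \eqref{eq:approx.error.L2}, bound $\sum_{j\neq i}|\kappa(\dx_i,\dx_j)|^2$ by $\gamma^2$ via the Babel condition \eqref{eq:babelD} (the paper uses $\|\cdot\|_2\leq\|\cdot\|_1$ where you use $\|\cdot\|_2^2\leq\|\cdot\|_\infty\|\cdot\|_1$, which is equivalent here), and invoke the eigenvalue lower bound $r^2-\gamma$ from Lemma~\ref{th:eigen.babel}. Your explicit remark that the deleted-row-and-column submatrix remains $\gamma$-Babel is a small justification the paper leaves implicit.
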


\begin{proof}
The proof is obtained by substituting the $\ell_2$-norm in \eqref{eq:approx.error.L2}, \ie, $\big({\sum_{j}
| \kappa(\dx_i,\dx_j) |^2}\big)^{\frac12}$, with an $\ell_1$-norm, since we have the relation $\|\boldsymbol{u}\|_2 \leq \|\boldsymbol{u}\|_1$. This yields
\begin{equation*}
	\normH{(\bI - \dic{\cp{P}})\kappa(\dx_i,\cdot)}^2
	\geq \kappa(\dx_i,\dx_i) -
	 \frac{1}{\sqrt{\dic{\lambda}_{_{\setminus\!\{\!i\!\}}\!m-1}}}
	 \mathop{\sum_{j=1}^m}_{j \neq i} | \kappa(\dx_i,\dx_j) |.
\end{equation*}
The above summation term is upper-bounded by $\gamma$ thanks to the Babel definition in \eqref{eq:babelD}. Moreover, the above eigenvalue is lower-bounded by $r^2-\gamma$, as derived in Lemma~\ref{th:eigen.babel} of the Appendix. This concludes the proof.
\end{proof}
This theorem should be compared with the work of \cite[Theorem~1]{Fan2014}, where the authors propose a lower bound on the quadratic approximation error for unit-norm atoms, with $1 - {\gamma}$. It is easy to see that the lower bound given in Theorem~\ref{sec:approx.atom.babel} is tighter than the previously proposed bound, and extends the result to atoms that are not unit-norm.


\section{Approximation of a feature}\label{sec:feature}

In this section, we study the relevance of approximating any feature with its projection onto the subspace spanned by the atoms of a dictionary. An upper bound on the approximation error is derived in the following theorem for any sparsification criterion, while specific bounds in term of the threshold of each criterion are given in the following Theorem~\ref{th:error.feature.criteria}. Moreover, these results are explored in two particular kernel-based learning algorithms, with the empirical mean (see Section~\ref{sec:feature.mean}) and the principal axes (see Section~\ref{sec:feature.kpca}) as features to be estimated.

\begin{theorem}\label{th:error.feature}
Consider the approximation of some feature $\psi(\cdot) = \sum_{i=1}^{n} \alpha_i \, \kappa(\bx_i,\cdot)$ with a sparse solution given by projecting it onto the subspace spanned by the $m$ atoms of a given dictionary. The quadratic error of such approximation is upper-bounded by 
\begin{equation*}
	\left( n - m \right) \, \|\balpha\|^2 \, \epsilon^2,
\end{equation*}
where $\epsilon$ is an upper bound on the approximation of any $\kappa(\bx_i,\cdot)$ with a linear combination of atoms from the dictionary.
\end{theorem}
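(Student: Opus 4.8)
The plan is to exploit the linearity of the orthogonal projection $\dic{\cp{P}}$ together with the fact that it fixes every atom of the dictionary. Writing $\psi(\cdot) = \sum_{i=1}^n \alpha_i \, \kappa(\bx_i,\cdot)$ and applying $\bI - \dic{\cp{P}}$ termwise, I would first observe that $(\bI - \dic{\cp{P}})\kappa(\bx_i,\cdot) = 0$ whenever $\bx_i$ coincides with one of the dictionary points $\dx_1,\ldots,\dx_m$, since $\kappa(\dx_j,\cdot)$ lies in the range of $\dic{\cp{P}}$. Hence the residual $(\bI - \dic{\cp{P}})\psi(\cdot)$ collapses to a sum over the index set $S = \{\, i : \bx_i \notin \{\dx_1,\ldots,\dx_m\}\,\}$, whose cardinality is at most $n-m$.

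Next I would bound the norm of this finite sum. Using the elementary inequality $\normH{\sum_{i \in S} v_i}^2 \le |S| \sum_{i\in S}\normH{v_i}^2$ — a direct consequence of the Cauchy-Schwarz inequality — with $v_i = \alpha_i\,(\bI - \dic{\cp{P}})\kappa(\bx_i,\cdot)$, one gets $\normH{(\bI-\dic{\cp{P}})\psi(\cdot)}^2 \le |S| \sum_{i\in S}\alpha_i^2\,\normH{(\bI-\dic{\cp{P}})\kappa(\bx_i,\cdot)}^2$. By the hypothesis each residual norm $\normH{(\bI-\dic{\cp{P}})\kappa(\bx_i,\cdot)}$ is at most $\epsilon$, and $\sum_{i \in S}\alpha_i^2 \le \|\balpha\|^2$, so the right-hand side is at most $(n-m)\,\|\balpha\|^2\,\epsilon^2$, which is exactly the claimed bound. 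Equivalently, one may route the last step through the triangle inequality $\normH{(\bI-\dic{\cp{P}})\psi(\cdot)} \le \sum_{i\in S}|\alpha_i|\,\epsilon$ followed by $\sum_{i\in S}|\alpha_i| \le \sqrt{|S|}\,\|\balpha\|$; both routes give the same constant.

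The argument is essentially routine; the only point needing a little care is the bookkeeping that at least $m$ of the $n$ summands in $\psi$ drop out, so that the combinatorial factor is $n-m$ rather than $n$ — this hinges on the dictionary being a subset of the training samples, as specified by the reduced-order model~\eqref{eq:repr_m}. It is worth emphasizing that the bound is deliberately coarse: it substitutes the worst-case residual $\epsilon$ uniformly and discards any orthogonality structure among the individual residuals. Sharper, criterion-specific versions are then obtained simply by substituting for $\epsilon$ the discarding-error bounds of Theorems~\ref{th:error.discard.dist}--\ref{th:error.discard.Babel}, which is precisely what the subsequent Theorem~\ref{th:error.feature.criteria} will do.
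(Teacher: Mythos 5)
Your proposal is correct and follows essentially the same route as the paper's proof: linearity of $\bI - \dic{\cp{P}}$, the observation that the $m$ atoms have zero residual so only the $n-m$ discarded samples contribute, and a triangle-inequality/Cauchy--Schwarz step that introduces the factor $|S|\le n-m$ and the norm $\|\balpha\|^2$. The only cosmetic difference is that you restrict to the index set $S$ before applying Cauchy--Schwarz, whereas the paper applies it over all $n$ indices and then truncates the residual sum; both yield the identical bound.
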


\begin{proof}
Let $\dic{\cp{P}}$ be the projection operator onto the subspace spanned by the atoms of the dictionary under scrutiny, approximating $\psi(\cdot) = \sum_{i=1}^{n} \alpha_i \kappa(\bx_i,\cdot)$ with $\dic{\psi}(\cdot) = \sum_{j=1}^{m} \dic{\alpha}_j \kappa(\dx_j,\cdot)$. The error of such approximation is 
\begin{align}\label{eq:error.feature.first}
\nonumber
    \normH{ (\bI - \dic{\cp{P}}) \psi(\cdot)}
&= \normH{\sum_{i=1}^n \alpha_i \, (\bI - \dic{\cp{P}}) \, \kappa(\bx_i,\cdot)} 
\\&\leq \sum_{i=1}^n |\alpha_i| \, \normH{(\bI - \dic{\cp{P}}) \, \kappa(\bx_i,\cdot)},
\end{align}
where the inequality is due to the generalized triangular inequality. By applying the Cauchy-Schwarz inequality, we get the quadratic approximation error
 \begin{equation}\label{eq:error.feature.Cauchy.Schwarz}
    \normH{ (\bI - \dic{\cp{P}}) \psi(\cdot)}^2
   \leq \sum_{i=1}^n \alpha_i^{2} ~ \sum_{i=1}^n \normH{(\bI - \dic{\cp{P}}) \, \kappa(\bx_i,\cdot)}^2.
\end{equation}
The first summation is the quadratic $\ell_2$-norm of the vector of coefficients, namely $\|\balpha\|^2$. For the second summation, we separate it in two terms, entries belonging to the dictionary and those discarded thanks to the used sparsification criterion. While the former do not contribute to the error, 
only the latter take part in the summation, namely the $n - m$ discarded samples where $m$ is the size of the dictionary. Let $\epsilon^2$ be an upper bound on the quadratic error of discarding samples, as given in Section~\ref{sec:approx.error.discard}. Then, we get
\begin{align*}
    \normH{ (\bI - \dic{\cp{P}}) \psi(\cdot)}^2
    &\leq \left( n - m \right) \|\balpha\|^2 ~\epsilon^2,
\end{align*}
which concludes the proof.
\end{proof}

By revisiting the upper bounds given in Section~\ref{sec:approx.error.discard} for each sparsification criterion, we can easily show the following results. Expressions for non-unit-norm atoms can be derived without difficulty from Theorems \ref{th:error.discard.dist}, \ref{th:error.discard.coher}, \ref{th:error.discard.Babel}.
\begin{theorem}\label{th:error.feature.criteria}
The upper bound given in Theorem~\ref{th:error.feature} can be specified for each sparsification criterion in terms of the used threshold. For unit-norm atoms, we have
\begin{itemize}
\item $( n - m) \|\balpha\|^2 (1 - \sqrt{1 - \delta^2})$ for the $\delta$-distant criterion.
\item $( n - m) \|\balpha\|^2 \delta^2$ for the $\delta$-approximate criterion.
\item $( n - m) \|\balpha\|^2 (1-\gamma)$ for the $\gamma$-coherent criterion.
\item $( n - m) \|\balpha\|^2 \big(1-{\gamma}/{\sqrt{m(1+\gamma)}}\big)$ for the $\gamma$-Babel criterion.
\end{itemize}
\end{theorem}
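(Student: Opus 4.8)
The plan is to read this off Theorem~\ref{th:error.feature} by substitution. That theorem asserts that the quadratic approximation error of the feature is at most $(n-m)\,\|\balpha\|^2\,\epsilon^2$, where $\epsilon^2$ is \emph{any} number that upper-bounds the quadratic error of approximating each discarded $\kappa(\bx_i,\cdot)$ by the span of the dictionary. So the task reduces to identifying a suitable $\epsilon^2$ in each of the four cases and plugging it in.

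First I would dispatch the $\delta$-approximate criterion, which is the only one not directly covered by Section~\ref{sec:approx.error.discard}: a sample discarded by the approximation criterion satisfies, by the defining condition \eqref{eq:approx.crit}, $\min_{\xi_1\cdots\xi_m}\|\kappa(\bx_i,\cdot)-\sum_{j}\xi_j\,\kappa(\dx_j,\cdot)\|_\H^2<\delta^2$, i.e. $\epsilon^2=\delta^2$ works, giving the bound $(n-m)\,\|\balpha\|^2\,\delta^2$. For the remaining three criteria I would simply quote the unit-norm specializations already proved: $\epsilon^2 = 1-\sqrt{1-\delta^2}$ from Theorem~\ref{th:error.discard.dist} for the $\delta$-distant criterion, $\epsilon^2 = 1-\gamma$ from Theorem~\ref{th:error.discard.coher} for the $\gamma$-coherent criterion, and $\epsilon^2 = 1-\gamma/\sqrt{m(1+\gamma)}$ from Theorem~\ref{th:error.discard.Babel} for the $\gamma$-Babel criterion. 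Inserting each into the bound of Theorem~\ref{th:error.feature} yields the four displayed expressions.

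There is no genuine obstacle here: the statement is a corollary obtained by composing Theorem~\ref{th:error.feature} with the discarding-error estimates of Section~\ref{sec:approx.error.discard}. The only point needing a moment's care is that the chosen $\epsilon^2$ must bound the discarding error \emph{uniformly} over all $n-m$ discarded samples, which is automatic here because in the unit-norm case each of those bounds depends only on the threshold (and, for Babel, on $m$), not on the particular sample. The non-unit-norm versions follow identically from Theorems~\ref{th:error.discard.dist}, \ref{th:error.discard.coher}, \ref{th:error.discard.Babel}, except that one must first replace $\epsilon^2$ by the supremum of $\kappa(\bx_t,\bx_t)-\sqrt{\kappa(\bx_t,\bx_t)-\delta^2}$, $\kappa(\bx_t,\bx_t)-\gamma\sqrt{\kappa(\bx_t,\bx_t)}$, respectively $\kappa(\bx_t,\bx_t)-\gamma/\sqrt{m(R^2+\gamma)}$, over $\bx_t\in\X$, which is finite since $\kappa(\bx_t,\bx_t)\le R^2$.
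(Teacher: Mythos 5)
Your proof is correct and matches the paper's (essentially unstated) argument: the paper simply says the result follows ``by revisiting the upper bounds given in Section~\ref{sec:approx.error.discard}'', i.e.\ by substituting the unit-norm discarding-error bounds of Theorems~\ref{th:error.discard.dist}, \ref{th:error.discard.coher}, \ref{th:error.discard.Babel} (and the defining condition \eqref{eq:approx.crit} for the approximation criterion) as $\epsilon^2$ in Theorem~\ref{th:error.feature}. Your added remarks on uniformity over the discarded samples and on the non-unit-norm case are correct refinements of the same route, not a different one.
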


We explore next these results for two particular kernel-based learning algorithms, in order to clarify the relevance of these bounds.

\subsection{Approximation of the empirical mean}\label{sec:feature.mean}

The empirical mean is a fundamental feature of the set of sample, and its use is essential in many statistical methods. For instance, it is investigated in \cite{Jenssen13} for visualization and clustering of nonnegative data and in \cite{12.isit,12.eusipco.oneclass} for one-class classification with kernel-based methods. In the following, we study the relevance of approximating the empirical mean by its projection onto the subspace spanned by the atoms of a dictionary. Let $\psi(\cdot) = \frac1n \sum_{i=1}^{n} \kappa(\bx_i,\cdot)$ be the empirical mean, namely $\alpha_i = 1/n$ for any $i=1,2, \ldots, n$. From Theorem~\ref{th:error.feature}, we get 
\begin{equation}\label{eq:error.feature.mean.first}
	\normH{ (\bI - \dic{\cp{P}}) \psi(\cdot)}^2 \leq \left( 1 - \frac{m}{n} \right) \epsilon^2,
\end{equation}
where $\max_i \normH{ (\bI - \dic{\cp{P}}) \kappa(\bx_i,\cdot)} \leq \epsilon$. In the following, we give a sharper bound.

Indeed, we provide a sharper bound by relaxing the use of the Cauchy-Schwarz inequality in \eqref{eq:error.feature.Cauchy.Schwarz}, thanks to the fact that the coefficients $\alpha_i$ are constant, \ie, independent of $i$. As a consequence, we get by revisiting expression \eqref{eq:error.feature.first}:
\begin{eqnarray*}
    \normH{ (\bI - \dic{\cp{P}}) \psi(\cdot)}
  &\leq& \sum_{i=1}^n |\alpha_i| \normH{(\bI - \dic{\cp{P}}) \, \kappa(\bx_i,\cdot)}
\\  &=& \frac{1}{n} \sum_{i=1}^n \normH{(\bI - \dic{\cp{P}}) \, \kappa(\bx_i,\cdot)}
\\  &\leq& \frac1n \left( n - m \right) \epsilon,
\end{eqnarray*}
where we have followed the same decomposition as in the proof of Theorem~\ref{th:error.feature}, with only the $n - m$ discarded samples contribute to the summation term. Therefore, the quadratic approximation error is upper-bounded as follows:
\begin{equation*}
	\normH{ (\bI - \dic{\cp{P}}) \psi(\cdot)}^2
	    \leq \left( 1 - \frac{m}{n} \right)^2 \epsilon^2.
\end{equation*}
This bound is sharper than the one in \eqref{eq:error.feature.mean.first} since $1 - \frac{m}{n}<1$.

By revisiting Theorem~\ref{th:error.feature.criteria} in the light of this result, the upper bound on the quadratic approximation error $\normH{ (\bI - \dic{\cp{P}}) \psi(\cdot)}^2$ can be described in terms of the threshold of each sparsification criterion, as follows:
\begin{itemize}
\item $\displaystyle\left(1 - \frac{m}{n} \right)^2 \big(1 - \sqrt{1 - \delta^2}\big)$ for the $\delta$-distant criterion.
\item $\displaystyle\left(1 - \frac{m}{n} \right)^2 \delta^2$ for the $\delta$-approximate criterion.
\item $\displaystyle\left(1 - \frac{m}{n} \right)^2 (1-\gamma)$ for the $\gamma$-coherent criterion.
\item $\displaystyle\left(1 - \frac{m}{n} \right)^2 \left(1-\frac{\gamma}{\sqrt{m(1+\gamma)}} \right)$ for the $\gamma$-Babel criterion.
\end{itemize}
These results generalize the work in \cite{12.ssp.one_class}, where only the case of the coherence criterion is studied. Expressions for dictionaries with atoms that are not unit-norm can be easily obtained thanks to Theorems \ref{th:error.discard.dist}, \ref{th:error.discard.coher}, \ref{th:error.discard.Babel}.

\subsection{Approximation of the most relevant principal axes}\label{sec:feature.kpca}

Any sparsification criterion can be seen as a dimensionality reduction technique, because it identifies a subspace by selecting relevant samples from the available ones. Since it is an unsupervised approach, it is natural to connect it with the kernel principal component analysis. For the sake of clarity, it is assumed that the data are centered in the feature space; see \cite{14.tpami.center} for connections to the uncentered case.

The principal component analysis (PCA) seeks the principal axes that capture the most of the data variance. The principal axes correspond to the eigenvectors associated to the largest eigenvalues of the covariance matrix. In its kernel-based counterpart, \ie, the kernel-PCA, the $k$-th principal axis takes the form $\psi_k(\cdot)=\sum_{i=1}^n \alpha_{i,k} \, \kappa(\bx_i,\cdot)$, where the coefficients $\alpha_{i,k}$ are the entries of the $k$-th eigenvector of the Gram matrix $\bK$. Moreover, to get unit-norm principal axes, the coefficients $\alpha_{i,k}$ are normalized such that $\sum_{i=1}^n \alpha_{i,k}^2=1/n\lambda_k$. In this expression, $\lambda_k$ is the $k$-th eigenvalue of the Gram matrix, also called principal value. In the following, we highlight the connections between the kernel-PCA and the online sparsification criteria.

\begin{theorem}\label{th:error.feature.kpca}
Let $\psi_k(\cdot)$ be the $k$-th principal axe of the kernel functions $\kappa(\bx_i,\cdot)$, for $i=1,2, \ldots, n$, associated to the eigenvalue $\lambda_k$ of the corresponding Gram matrix. Its approximation with a dictionary of $m$ kernel functions has a quadratic error that can be upper-bounded by 
\begin{equation*}
	\left( 1 - \frac{m}{n} \right) \frac{\epsilon^2}{\lambda_k},
\end{equation*}
where $\epsilon$ is an upper bound on the approximation of any $\kappa(\bx_i,\cdot)$ with a linear combination of atoms from the dictionary.
\end{theorem}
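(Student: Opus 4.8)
The plan is to obtain this as an immediate corollary of Theorem~\ref{th:error.feature}. The $k$-th principal axis is already presented in the form required by that theorem, namely $\psi_k(\cdot) = \sum_{i=1}^n \alpha_{i,k}\,\kappa(\bx_i,\cdot)$ with coefficient vector $\balpha = (\alpha_{1,k}, \ldots, \alpha_{n,k})^\top$. So the first step is simply to apply Theorem~\ref{th:error.feature} verbatim, which yields
\begin{equation*}
    \normH{(\bI - \dic{\cp{P}})\psi_k(\cdot)}^2 \leq (n-m)\,\|\balpha\|^2\,\epsilon^2,
\end{equation*}
where $\epsilon$ is an upper bound on $\normH{(\bI - \dic{\cp{P}})\kappa(\bx_i,\cdot)}$ valid for every $i$, exactly as in the statement.

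The only substantive point is to evaluate $\|\balpha\|^2 = \sum_{i=1}^n \alpha_{i,k}^2$. This is fixed by the normalization convention adopted for the kernel-PCA principal axes: the entries of the $k$-th eigenvector of $\bK$ are rescaled so that $\sum_{i=1}^n \alpha_{i,k}^2 = 1/(n\lambda_k)$ (the scaling that makes the principal axes properly normalized). Substituting this into the bound above gives
\begin{equation*}
    \normH{(\bI - \dic{\cp{P}})\psi_k(\cdot)}^2 \leq \frac{n-m}{n\lambda_k}\,\epsilon^2 = \left(1 - \frac{m}{n}\right)\frac{\epsilon^2}{\lambda_k},
\end{equation*}
which is precisely the claimed upper bound, completing the argument.

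There is no real obstacle here: the proof is a one-line substitution into Theorem~\ref{th:error.feature} once the value of $\|\balpha\|^2$ is read off from the normalization of the principal axes. The point worth emphasizing in the write-up is interpretive rather than technical, namely that the $1/\lambda_k$ factor shows the bound degrades for principal axes associated with small principal values, which is consistent with the intuition that low-energy components are harder to recover from a sparse dictionary. A natural follow-up remark is that $\epsilon$ can be replaced by the criterion-specific discarding bounds of Section~\ref{sec:approx.error.discard} (as tabulated in Theorem~\ref{th:error.feature.criteria}) to obtain explicit bounds in terms of $\delta$ or $\gamma$ for each sparsification criterion.
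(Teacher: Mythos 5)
Your proof is correct and coincides with the paper's own argument, which likewise obtains the result by substituting $\|\balpha\|^2 = 1/(n\lambda_k)$ (from the kernel-PCA normalization) into Theorem~\ref{th:error.feature}. Your closing remarks on the $1/\lambda_k$ interpretation and the criterion-specific instantiations also match the discussion the paper gives immediately after the theorem.
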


The proof of this theorem is straightforward, by substituting $\|\balpha\|^2$ with ${1}/{n \lambda_k}$ in Theorem~\ref{th:error.feature}. Theorem~\ref{th:error.feature.kpca} shows that, under the only condition that the used dictionary has an upper bound on the error of approximating each kernel function, the principal axes associated to the largest principal values have the smallest approximation errors. One can therefore say that the most relevant principal axes lie, with a small error, in the span of the sparse dictionary.

Moreover, we derive expressions for each sparsification criterion, as given next in terms of the used threshold:
\begin{itemize}
\item $\displaystyle\left(1 - \frac{m}{n} \right) \frac{1 - \sqrt{1 - \delta^2}}{\lambda_k}$ for the $\delta$-distant criterion.
\item $\displaystyle\left(1 - \frac{m}{n} \right) \frac{\delta^2}{\lambda_k}$ for the $\delta$-approximate criterion.
\item $\displaystyle\left(1 - \frac{m}{n} \right) \frac{1-\gamma}{\lambda_k}$ for the $\gamma$-coherent criterion.
\item $\displaystyle\left(1 - \frac{m}{n} \right) \frac{1}{\lambda_k} \Bigg(1-\frac{\gamma}{\sqrt{m(1+\gamma)}} \Bigg)$ for the $\gamma$-Babel criterion.
\end{itemize}
These results generalize previous work on the approximation and the coherence criteria, and provide tighter bounds than the ones previously known in the literature. Indeed, the upper bound ${\delta^2}/{\lambda_k}$ was derived for the approximation criterion in \cite[Theorem~3.3]{Eng04} and in \cite[Theorem~5]{12.tpami}, while the coherence criterion is studied in \cite[Proposition~5]{Hon07.isit} with the upper bound ${(1-\gamma)}/{\lambda_k}$. 

%
\bigskip

\section{Final remarks}\label{sec:final_remarks}

In this paper, we 
studied the approximation errors of any sample when dealing with the distance, the coherence, or the Babel criterion, revealing that these criteria are roughly based on an approximation process. By deriving an upper bound on the error of approximating a sample discarded from the dictionary, we explored that the atoms are ``sufficient'' to represent any sample. The dual condition, namely showing that each atom of the dictionary is ``necessary'', was also exhibited by providing a lower bound on the approximation of any atom of the dictionary with the other atoms. Moreover, beyond the analysis of a single sample, we extended these results to the estimation of any feature, by describing in detail two classes of features, the empirical mean (\ie, centroid) and the principal axes in kernel-PCA. 

This work did not devise any particular sparsification criterion. It provided a framework to study online sparsification criteria. We argued that these criteria behave essentially in an identical mechanism, and share many interesting and desirable properties. Without loss of generality, we considered the framework of kernel-based learning algorithms. It is worth noting that these machines are intimately connected with the Gaussian processes \cite{gpml}, where the approximation criterion was initially proposed \cite{Csato02}.

\bigskip
\appendix

\label{sec:eigen}

This appendix provides bounds on the eigenvalues of a Gram matrix associated to a sparse dictionary, for each of the sparsity measures investigated in this paper. For completeness, these bounds are put here in a nutshell ; see \cite[Section~IV]{14.sparse.eigen} for more details. A cornerstone of these results is the well-known Ger\v{s}gorin Discs Theorem \cite[Chapter~6]{Hor12}. Revisited here for a Gram matrix associated to a sparse dictionary, it states that any of its eigenvalues lies in the union of the $m$ discs, centered on each diagonal entry of $\dK$ with a radius given by the sum of the absolute values of the other $m-1$ entries from the same row. In other words, for each $\dic{\lambda}_i$, there exists at least one $j \in \{1, 2, \ldots, m\}$ such that
\begin{equation}\label{eq:gershgorin}
|\dic{\lambda}_i - \kappa(\dx_j,\dx_j)|  
		\leq \mathop{\sum_{j=1}^m}_{j \neq i} |\kappa(\dx_i,\dx_j)|.
\end{equation}
This theorem provides upper and lower bounds on the eigenvalues of a Gram matrix associated to a sparse dictionary, as described in the following for each sparsity measure.

%

\begin{lemma}
\label{th:eigen.dist}
The eigenvalues of a Gram matrix associated to a $\delta$-distant dictionary of $m$ atoms are bounded as follows: 
	\begin{align*}
		r^2 - (m-1)R\sqrt{R^2-\delta^2} 
		&\leq 
		\dic{\lambda}_m 
		\leq 
		\cdots 
\\		
		\cdots 
		&\leq
		\dic{\lambda}_1
		\leq 
		R^2 + (m-1)R\sqrt{R^2-\delta^2}.
	\end{align*}
\end{lemma}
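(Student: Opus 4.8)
The plan is to apply the Ger\v{s}gorin Discs Theorem \eqref{eq:gershgorin} recalled immediately above, which confines every eigenvalue $\dic{\lambda}_i$ of $\dK$ to a disc centered at some diagonal entry $\kappa(\dx_j,\dx_j)$ with radius equal to the corresponding off-diagonal absolute row sum $\sum_{k \neq j} |\kappa(\dx_j,\dx_k)|$. Since by definition $r^2 \le \kappa(\dx_j,\dx_j) \le R^2$ for every atom $\dx_j$, the only quantity that still needs to be controlled is the radius of these discs.

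To bound a single off-diagonal entry, I would invoke the $\delta$-distant property \eqref{eq:dist}: for any pair $(\dx_j,\dx_k)$ with $k \neq j$ we have $\kappa(\dx_j,\dx_j) - \kappa(\dx_j,\dx_k)^2/\kappa(\dx_k,\dx_k) \ge \delta^2$, hence $\kappa(\dx_j,\dx_k)^2 \le \kappa(\dx_k,\dx_k)\big(\kappa(\dx_j,\dx_j) - \delta^2\big) \le R^2(R^2-\delta^2)$, so that $|\kappa(\dx_j,\dx_k)| \le R\sqrt{R^2-\delta^2}$. Summing over the $m-1$ indices $k \neq j$ then bounds every off-diagonal row sum by $(m-1)R\sqrt{R^2-\delta^2}$, uniformly in $j$.

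Combining these two ingredients, each Ger\v{s}gorin disc is contained in the real interval $\big[\,\kappa(\dx_j,\dx_j) - (m-1)R\sqrt{R^2-\delta^2}\,,\ \kappa(\dx_j,\dx_j) + (m-1)R\sqrt{R^2-\delta^2}\,\big]$; taking the union over $j$ and using $r^2 \le \kappa(\dx_j,\dx_j) \le R^2$ places every eigenvalue of $\dK$ in $\big[\,r^2 - (m-1)R\sqrt{R^2-\delta^2}\,,\ R^2 + (m-1)R\sqrt{R^2-\delta^2}\,\big]$. Applying this in particular to the extreme eigenvalues gives $\dic{\lambda}_m \ge r^2 - (m-1)R\sqrt{R^2-\delta^2}$ and $\dic{\lambda}_1 \le R^2 + (m-1)R\sqrt{R^2-\delta^2}$, and the chain of inequalities between the intermediate eigenvalues is merely their declared non-increasing ordering.

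The main obstacle here is minor and is purely a matter of careful bookkeeping rather than substance: one must verify that $\kappa(\dx_j,\dx_j) - \delta^2 \ge 0$ so that the square-root estimate $|\kappa(\dx_j,\dx_k)| \le R\sqrt{R^2-\delta^2}$ is meaningful, which is automatic because the left-hand side of \eqref{eq:dist} equals a square and is therefore nonnegative (and the case $m=1$ is trivial, the single eigenvalue being $\kappa(\dx_1,\dx_1)\in[r^2,R^2]$, which the claimed bound recovers exactly); one must also keep track of which diagonal entry appears in each disc when forming the union so that the worst cases $r^2$ and $R^2$ are assigned to the correct side.
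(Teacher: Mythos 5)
Your proposal is correct and follows essentially the same route as the paper: the Ger\v{s}gorin Discs Theorem combined with the $\delta$-distant property \eqref{eq:dist} to bound the off-diagonal row sums of $\dK$ by $(m-1)R\sqrt{R^2-\delta^2}$, then $r^2 \le \kappa(\dx_j,\dx_j)\le R^2$ for the disc centers. The only cosmetic difference is that the paper factors $\sqrt{\kappa(\dx_i,\dx_i)-\delta^2}$ out of the row sum before bounding each $\sqrt{\kappa(\dx_j,\dx_j)}$ by $R$, whereas you bound each off-diagonal entry by $R\sqrt{R^2-\delta^2}$ individually; both yield the same radius.
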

\begin{proof}
From \eqref{eq:dist}, a $\delta$-distant dictionary satisfies
$$|\kappa(\dx_i,\dx_j)| \leq \sqrt{\kappa(\dx_j,\dx_j) \, \big( \kappa(\dx_i,\dx_i) - \delta^2 \big)},$$
for any $i=1,2,\ldots,m$, which yields
\begin{align*}
	\sum_{j} |\kappa(\dx_i,\dx_j)|
\nonumber& \leq \sum_j \sqrt{\kappa(\dx_j,\dx_j) \, \big( \kappa(\dx_i,\dx_i) - \delta^2 \big)}
\\	& = \sqrt{ \kappa(\dx_i,\dx_i) - \delta^2 } \sum_j \sqrt{\kappa(\dx_j,\dx_j)}.
\end{align*}
By substituting this relation in \eqref{eq:gershgorin}, we get that, for each eigenvalue $\dic{\lambda}_k$, there exists at least one $i$ such that
\begin{align*}
	|\dic{\lambda}_k - \kappa(\dx_i,\dx_i)|  
	& \leq \sqrt{ \kappa(\dx_i,\dx_i) - \delta^2 } \mathop{\sum_{j=1}^m}_{j \neq i} \sqrt{\kappa(\dx_j,\dx_j)}.
\end{align*}
\end{proof}

\begin{lemma}\label{th:eigen.coher}
The eigenvalues of a Gram matrix associated to a $\gamma$-coherent dictionary of $m$ atoms are bounded as follows:
	\begin{equation*}
		r^2-(m-1) \gamma R^2 \leq 
		\dic{\lambda}_m \leq 
		\cdots \leq
		\dic{\lambda}_1 \leq 
		R^2+(m-1) \gamma R^2.
	\end{equation*}
\end{lemma}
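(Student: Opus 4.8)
The plan is to follow exactly the same route as in the proof of Lemma~\ref{th:eigen.dist}: bound the off-diagonal entries of $\dK$ uniformly, feed the resulting bound on the Ger\v{s}gorin radii into \eqref{eq:gershgorin}, and then replace the diagonal entries $\kappa(\dx_i,\dx_i)$ by their extremes $r^2$ and $R^2$ to obtain a bound valid for every eigenvalue.

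First I would start from the definition of a $\gamma$-coherent dictionary, \eqref{eq:coher}, which gives for every pair $i \neq j$
\[
	|\kappa(\dx_i,\dx_j)| \leq \gamma \sqrt{\kappa(\dx_i,\dx_i)\,\kappa(\dx_j,\dx_j)} \leq \gamma R^2,
\]
the last inequality because both $\kappa(\dx_i,\dx_i)$ and $\kappa(\dx_j,\dx_j)$ are at most $R^2 = \sup_{\bx\in\X}\kappa(\bx,\bx)$. Summing over the $m-1$ indices $j \neq i$ yields $\mathop{\sum_{j=1}^m}_{j\neq i} |\kappa(\dx_i,\dx_j)| \leq (m-1)\,\gamma R^2$, i.e. each Ger\v{s}gorin radius is at most $(m-1)\gamma R^2$.

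Next I would invoke the Ger\v{s}gorin Discs Theorem in the form \eqref{eq:gershgorin}: for each eigenvalue $\dic{\lambda}_k$ there is an index $i$ with $|\dic{\lambda}_k - \kappa(\dx_i,\dx_i)| \leq (m-1)\gamma R^2$, hence
\[
	\kappa(\dx_i,\dx_i) - (m-1)\gamma R^2 \;\leq\; \dic{\lambda}_k \;\leq\; \kappa(\dx_i,\dx_i) + (m-1)\gamma R^2 .
\]
Finally, using $r^2 \leq \kappa(\dx_i,\dx_i) \leq R^2$ to discard the dependence on the (unknown) index $i$ gives $r^2-(m-1)\gamma R^2 \leq \dic{\lambda}_k \leq R^2 + (m-1)\gamma R^2$ for every $k$; since the eigenvalues are ordered non-increasingly, the chain of inequalities in the statement follows, with $\dic{\lambda}_m$ achieving the lower end and $\dic{\lambda}_1$ the upper end of the common interval.

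There is no genuine obstacle here: the argument is a two-line consequence of Ger\v{s}gorin once the coherence bound $|\kappa(\dx_i,\dx_j)|\leq\gamma R^2$ is in hand, and it is strictly parallel to the distance case, with $R\sqrt{R^2-\delta^2}$ simply replaced by $\gamma R^2$. The only point worth a word of care is that passing from $\kappa(\dx_i,\dx_i)$ to $R^2$ (resp.\ $r^2$) is what makes the bound uniform over all atoms at the cost of some tightness; retaining $\kappa(\dx_i,\dx_i)$ would give a per-disc estimate but not a single clean two-sided bound on the spectrum.
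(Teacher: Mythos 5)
Your proof is correct and follows essentially the same route as the paper: bound each off-diagonal entry by $\gamma R^2$ via the coherence condition, sum to get a Ger\v{s}gorin radius of $(m-1)\gamma R^2$, and replace the diagonal entries by $r^2$ and $R^2$. The only cosmetic difference is that the paper bounds $|\kappa(\dx_i,\dx_j)|$ by $\gamma R\sqrt{\kappa(\dx_i,\dx_i)}$ first and only substitutes $\kappa(\dx_i,\dx_i)\leq R^2$ at the end, whereas you absorb both factors into $R^2$ immediately; the resulting bound is identical.
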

\begin{proof}
A $\gamma$-coherent dictionary satisfies
\begin{equation*}
	\mathop{\max_{j=1\cdots m}}_{j \neq i} 
	\frac{|{\kappa(\dx_i,\dx_j)}|} {\sqrt{\kappa(\dx_i,\dx_i) \, \kappa(\dx_j,\dx_j)}}
	\leq \gamma,
\end{equation*}
for any $i,j=1,2,\ldots,m$, which yields
\begin{align*}
	\mathop{\max_{j=1\cdots m}}_{j \neq i} |\kappa(\dx_i,\dx_j)|
	&\leq \gamma 
	\mathop{\max_{j=1\cdots m}}_{j \neq i} \sqrt{\kappa(\dx_i,\dx_i) \, \kappa(\dx_j,\dx_j)}
\\	&= \gamma \sqrt{\kappa(\dx_i,\dx_i)}
	\mathop{\max_{j=1\cdots m}}_{j \neq i} \sqrt{\kappa(\dx_j,\dx_j)}
\\	&\leq \gamma R \sqrt{\kappa(\dx_i,\dx_i)}.
\end{align*}
By injecting this expression in \eqref{eq:gershgorin}, we get
\begin{align*}
	\mathop{\sum_{j=1}^m}_{j \neq i} |\kappa(\dx_i,\dx_j)|
		&\leq (m-1) \mathop{\max_{j=1\cdots m}}_{j \neq i} |\kappa(\dx_i,\dx_j)| 
\\		&\leq (m-1) \gamma R \sqrt{\kappa(\dx_i,\dx_i)}.
\end{align*}
Since $\kappa(\dx_i,\dx_i) \leq R^2$, this completes the proof.\\
\end{proof}

\begin{lemma}\label{th:eigen.babel}
The eigenvalues of a Gram matrix associated to a $\gamma$-Babel dictionary are bounded as follows:
	\begin{equation*}
		r^2-\gamma \leq 
		\dic{\lambda}_m \leq 
		\cdots \leq
		\dic{\lambda}_1 \leq 
		R^2+\gamma.
	\end{equation*}
\end{lemma}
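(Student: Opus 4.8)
The plan is to apply the Ger\v{s}gorin Discs Theorem \eqref{eq:gershgorin} directly, exactly as in the proofs of Lemmas~\ref{th:eigen.dist} and \ref{th:eigen.coher}, but using the $\gamma$-Babel defining property \eqref{eq:babelD} to control the Ger\v{s}gorin radii in a single stroke. Recall that \eqref{eq:babelD} states
\begin{equation*}
	\max_{i=1\cdots m} \mathop{\sum_{j=1}^m}_{j \neq i} |\kappa(\dx_i,\dx_j)| \leq \gamma,
\end{equation*}
so the radius of every Ger\v{s}gorin disc is at most $\gamma$. Therefore, for each eigenvalue $\dic{\lambda}_k$, \eqref{eq:gershgorin} guarantees the existence of an index $i$ with $|\dic{\lambda}_k - \kappa(\dx_i,\dx_i)| \leq \gamma$, i.e. $\kappa(\dx_i,\dx_i) - \gamma \leq \dic{\lambda}_k \leq \kappa(\dx_i,\dx_i) + \gamma$.

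The second step is simply to replace the diagonal entry by its global bounds: since $r^2 \leq \kappa(\dx_i,\dx_i) \leq R^2$ for every atom by the definitions of $r$ and $R$, we obtain $r^2 - \gamma \leq \dic{\lambda}_k \leq R^2 + \gamma$ for every eigenvalue. Applying this to the largest eigenvalue $\dic{\lambda}_1$ and the smallest $\dic{\lambda}_m$, together with the fact that the eigenvalues are ordered non-increasingly, yields the chain
\begin{equation*}
	r^2 - \gamma \leq \dic{\lambda}_m \leq \cdots \leq \dic{\lambda}_1 \leq R^2 + \gamma,
\end{equation*}
which is the claim.

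There is really no obstacle here; the Babel condition was designed precisely to be the $\ell_1$-norm of the off-diagonal part of each row, which is exactly the Ger\v{s}gorin radius, so the bound drops out immediately without the extra Cauchy--Schwarz or $(m-1)\max$ manipulations needed in the distance and coherence cases. The only point worth stating explicitly is that \eqref{eq:babelD} bounds \emph{every} row sum uniformly by $\gamma$, so the same $\gamma$ works for whichever disc the Ger\v{s}gorin theorem places $\dic{\lambda}_k$ in.
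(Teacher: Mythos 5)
Your proof is correct and follows essentially the same route as the paper: the Babel condition \eqref{eq:babelD} bounds every Ger\v{s}gorin radius by $\gamma$, and the diagonal entries are squeezed between $r^2$ and $R^2$. You simply make explicit the final substitution of $\kappa(\dx_i,\dx_i)$ by its global bounds, which the paper leaves implicit.
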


\begin{proof}
The proof is straightforward from the Ger\v{s}gorin Discs Theorem, since expression \eqref{eq:gershgorin} becomes
\begin{equation*}
	|\dic{\lambda}_k - \kappa(\dx_i,\dx_i)|  
		\leq 
	\mathop{\sum_{j=1}^m}_{j \neq i} |\kappa(\dx_i,\dx_j)|
		\leq \gamma,
\end{equation*}
where the last inequality is due to the Babel measure.\\
\end{proof}

\bigskip


\bibliography{biblio_ph,bibdesk_Paul}
\bibliographystyle{ieeetr}

\begin{biography}[{}]
{Paul Honeine} (M'07) was born in Beirut, Lebanon, on October 2, 1977. He received the Dipl.-Ing. degree in mechanical engineering in 2002 and the M.Sc. degree in industrial control in 2003, both from the Faculty of Engineering, the Lebanese University, Lebanon. In 2007, he received the Ph.D. degree in Systems Optimisation and Security from the University of Technology of Troyes, France, and was a Postdoctoral Research associate with the Systems Modeling and Dependability Laboratory, from 2007 to 2008. Since September 2008, he has been an assistant Professor at the University of Technology of Troyes, France. His research interests include nonstationary signal analysis and classification, nonlinear and statistical signal processing, sparse representations, machine learning. Of particular interest are applications to (wireless) sensor networks, biomedical signal processing, hyperspectral imagery and nonlinear adaptive system identification. He is the co-author (with C. Richard) of the 2009 Best Paper Award at the IEEE Workshop on Machine Learning for Signal Processing. Over the past 5 years, he has published more than 100 peer-reviewed papers. 
\end{biography}


\end{document}